%% file: main.tex
\documentclass{article}

\PassOptionsToPackage{numbers, compress}{natbib}

\usepackage[preprint]{neurips_2026}

\usepackage[utf8]{inputenc}
\usepackage[T1]{fontenc}
\usepackage{hyperref}
\usepackage{url}
\usepackage{booktabs}
\usepackage{tabularx}
\usepackage[most]{tcolorbox}
\usepackage{placeins}

\newtcblisting{systempromptbox}{
  enhanced, breakable,
  colback=white,
  colframe=gray!75!black,
  boxrule=0.6pt,
  arc=2mm,
  left=6pt, right=6pt, top=4pt, bottom=4pt,
  fonttitle=\bfseries\small,
  coltitle=white,
  colbacktitle=gray!75!black,
  title=System Prompt,
  listing only,
  listing options={
    basicstyle=\ttfamily\scriptsize,
    breaklines=true,
    breakatwhitespace=false,
    columns=fullflexible,
    keepspaces=true,
    aboveskip=0pt,
    belowskip=0pt,
  }
}

\newtcblisting{userpromptbox}{
  enhanced, breakable,
  colback=white,
  colframe=gray!55!black,
  boxrule=0.6pt,
  arc=2mm,
  left=6pt, right=6pt, top=4pt, bottom=4pt,
  fonttitle=\bfseries\small,
  coltitle=white,
  colbacktitle=gray!55!black,
  title=User Prompt,
  listing only,
  listing options={
    basicstyle=\ttfamily\scriptsize,
    breaklines=true,
    breakatwhitespace=false,
    columns=fullflexible,
    keepspaces=true,
    aboveskip=0pt,
    belowskip=0pt,
  }
}
\usepackage{amsfonts}
\usepackage{nicefrac}
\usepackage{microtype}
\usepackage{xcolor}

\input{macro}

\crefname{definition}{Definition}{Definitions}
\Crefname{definition}{Definition}{Definitions}
\crefname{assumption}{Assumption}{Assumptions}
\Crefname{assumption}{Assumption}{Assumptions}
\crefname{lemma}{Lemma}{Lemmas}
\Crefname{lemma}{Lemma}{Lemmas}
\crefname{proposition}{Proposition}{Propositions}
\Crefname{proposition}{Proposition}{Propositions}
\crefname{corollary}{Corollary}{Corollaries}
\Crefname{corollary}{Corollary}{Corollaries}
\crefname{remark}{Remark}{Remarks}
\Crefname{remark}{Remark}{Remarks}

\title{\textsc{SMCEvolve}: Principled Scientific Discovery via Sequential Monte Carlo Evolution}

\begin{document}

\author{%
  Jiachen Jiang\thanks{Equal contribution. \quad $^{\dagger}$Corresponding author.}\ \ , Huminhao Zhu$^{*}$, Zhihui Zhu$^{\dagger}$ \\
  Department of Computer Science and Engineering \\
  The Ohio State University \\
  \texttt{\{jiang.2880, zhu.4228, zhu.3440\}@osu.edu}
}

\maketitle

\begin{abstract}
LLM-driven program evolution has emerged as a powerful tool for automated scientific discovery, yet existing frameworks offer no principled guide for designing their individual components and provide no guarantee that the search converges. We introduce \textsc{SMCEvolve}, which recasts program search as sampling from a reward-tilted target distribution and approximates it with a Sequential Monte Carlo (SMC) sampler. From this view, three core mechanisms emerge as principled components: adaptive parent resampling, mixture of mutation with acceptance, and automatic convergence control. We further provide a finite-sample complexity analysis that bounds the LLM-call budget required to reach a target approximation error. Across math, algorithm efficiency, symbolic regression, and end-to-end ML research benchmarks, \textsc{SMCEvolve} surpasses state-of-the-art evolving systems while using fewer LLM calls under self-determined termination. The code is available at \url{https://github.com/kongwanbianjinyu/SMCEvolve}.
\end{abstract}

\begin{figure}[h]
  \centering
  \includegraphics[width=\linewidth]{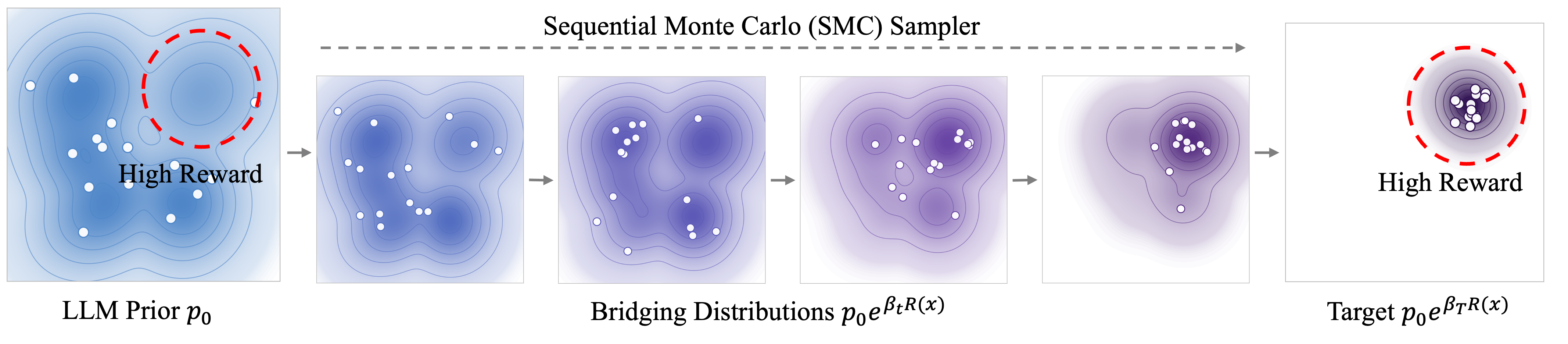}
  \caption{\textbf{LLM-driven program evolution targeting the reward-tilted distribution via Sequential Monte Carlo (SMC).} The contour depicts the distribution over the program space; each white dot is a particle (program) in the evolving population, and the dashed circle marks the high-reward region. Starting from particles drawn from the LLM prior $p_0$ (\emph{left}), where the high-reward region is rarely covered, the population is driven by the LLM toward new programs that form a sequence of bridging distributions $p_0 \, e^{\beta_t R(x)}$ with increasing reward intensity $\beta_t$, and ultimately concentrates on the target $p_0 \, e^{\beta_T R(x)}$ (\emph{right}). This yields a principled evolutionary framework \textsc{SMCEvolve} with convergence guarantees, in contrast to ad-hoc pipelines.}
  \label{fig:teaser}
\end{figure}

\input{sections/1_introduction}

\input{sections/2_method}
\input{sections/3_convergence}
\input{sections/4_experiments}

\input{sections/5_ablation}

\section{Conclusion}
\label{sec:conclusion}

{In this paper, we propose \textsc{SMCEvolve}, recasting LLM-driven program evolution as Sequential Monte Carlo sampling from a reward-tilted target distribution. Three coupled mechanisms emerge from this single principle, yielding the first finite-sample bound for evolutionary program search and state-of-the-art performance across domains. Future directions include tightening the bound and extending to multi-objective or reward-free settings.}
\section*{Acknowledgements}

We acknowledge support from NSF grants IIS-2312840 and ECCS-2409701.

\bibliography{reference}
\bibliographystyle{ieeetr}

\input{sections/7_appendix}

\end{document}

%% file: macro.tex
\usepackage{amsmath,amssymb,amsbsy,amsfonts}
\usepackage{amsthm}
\usepackage{aliascnt}
\usepackage{cleveref}
\usepackage{graphicx}
\usepackage{xcolor}
\usepackage{algorithm}
\usepackage{algorithmic}
\usepackage{subcaption}
\usepackage{wrapfig}

\usepackage{enumitem}

\newtheorem{theorem}{Theorem}[section]

\newaliascnt{proposition}{theorem}

\aliascntresetthe{proposition}

\newaliascnt{lemma}{theorem}
\newtheorem{lemma}[lemma]{Lemma}
\aliascntresetthe{lemma}

\newaliascnt{corollary}{theorem}

\aliascntresetthe{corollary}

\newaliascnt{definition}{theorem}
\newtheorem{definition}[definition]{Definition}
\aliascntresetthe{definition}

\newaliascnt{assumption}{theorem}

\aliascntresetthe{assumption}

\newaliascnt{remark}{theorem}

\aliascntresetthe{remark}

\newcommand{\R}{\mathbb{R}}
\newcommand{\E}{\operatorname{\mathbb{E}}}

\newcommand{\pzero}{p_0}
\newcommand{\pstar}{p^{*}}

\newcommand{\calB}{\mathcal{B}}
\newcommand{\calC}{\mathcal{C}}

\newcommand{\calX}{\mathcal{X}}

\usepackage{multirow}
\usepackage[table]{xcolor}
\definecolor{lblueclr}{HTML}{EFF7FE}

\providecommand{\lblue}{}
\makeatletter
\@ifundefined{lblueclr}{%
  \definecolor{lblueclr}{HTML}{EFF7FE}%
  \renewcommand{\lblue}{\cellcolor{lblueclr}}%
}{}
\makeatother

%% file: sections/1_introduction.tex
\section{Introduction}
\label{sec:intro}

LLM-driven agents have enabled automated scientific discovery at scale: given the problem description and evaluator, the agent can autonomously search for programs that optimize a target objective, iteratively refining candidates based on evaluation feedback~\citep{ye2024reevo, liu2024evolution, romera2024mathematical,novikov2025alphaevolve, lange2025shinkaevolve}. This paradigm has rapidly expanded across diverse domains, including mathematical discovery~\citep{georgiev2025mathematical},  symbolic regression~\citep{shojaee2024llm}, materials design~\citep{abhyankarllema}, machine learning~\citep{chan2024mle, imajuku2025ale} and end-to-end research automation~\cite{yamada2025ai}.

Despite these successes, current agent design remains largely
\emph{empirical} and the area lacks a unifying \emph{theoretical}
framework that guides the design. A typical pipeline, exemplified by
AlphaEvolve~\cite{novikov2025alphaevolve}, ShinkaEvolve~\cite{lange2025shinkaevolve}, and others~\cite{ye2024reevo, liu2024evolution, jiang2026deltaevolve, cemri2026adaevolve, yan2026pacevolve, liu2026evox}, maintains a population of
candidate programs, prompts an LLM to mutate a selected parent based
on context, and adds the evaluated offspring back into the population.
The absence of theoretical guidance leaves several limitations. First,
there is no principled guide for designing each component within the
pipeline: components such as parent selection, context construction,
and population management are hand-crafted in isolation and validated
through extensive ablations rather than derived from a coherent
principle that connects them. Second, the convergence of the pipeline
is not guaranteed, leading to substantial randomness and high variance
across runs;
moreover, existing systems cannot distinguish whether the population
has converged to a high-reward region or merely stalled at a local
mode
, forcing the number of iterations to be fixed in advance—either too small to reach a good solution, or too large and wasting compute. These observations motivate the central question of this work:

\begin{center}
\emph{Can we ground LLM-driven program evolution in a rigorous probabilistic framework that unifies its components and provides convergence guarantees?}
\end{center}

We address this question from a simple observation: searching for high-reward programs with an LLM can be cast as \emph{sampling from a reward-tilted target distribution} $p^*(x \mid q) \propto p_0(x \mid q)\, e^{\beta R(x)}$, which reweights the LLM prior $p_0$ over programs $x$ for a given problem $q$ toward those with high evaluation reward $R$. Since directly sampling from $p^*$ is intractable, we turn to \emph{Sequential Monte Carlo (SMC) samplers} \citep{del2006sequential, dai2022invitation} which progressively transport particles from the prior $p_0$ toward the target $p^*$ through a sequence of bridging distributions with increasing reward weight. Treating each candidate program as a particle, we derive
\textsc{SMCEvolve}, a complete evolutionary framework whose three
core mechanisms emerge directly from SMC theory:

\begin{itemize}[leftmargin=1.5em,itemsep=2pt,topsep=2pt]

\item \textbf{Adaptive parent resampling.}
Parents are resampled by importance weights derived from the current
bridging distribution, where the influence of reward on selection
strengthens as the search progresses—from near-uniform sampling that
encourages exploration in early iterations, to sharply reward-focused
sampling that drives exploitation in later iterations. This stands in
contrast to existing systems, which rely on a fixed selection rule
throughout the search.

\item \textbf{Mixture of mutation with acceptance.} 
To ensure convergence, the LLM generates multiple proposals at each step,
each accepted or rejected with a probability related to the current bridging
distribution. To accelerate convergence, we further design a mixture
of LLM-driven kernels along two axes—edit scope (local diff vs. full
rewrite) and information source (single-parent vs. cross-program
inspiration)—and adaptively select among them via Thompson sampling~\citep{russo2018tutorial},
which favors kernels that have recently produced high-reward
proposals. In contrast, existing systems accept the proposal unconditionally and rely on either a single mutation strategy or a fixed-weight combination of strategies.

\item \textbf{Automatic convergence control.} The number of iterations is determined adaptively from the state of
the population rather than fixed in advance: the search proceeds in
small steps when the population is still spread across diverse
candidates and in larger steps once it concentrates on high-reward
programs, and terminates automatically once it reaches the target
distribution.
In contrast, existing systems run for a fixed number of
iterations specified in advance, without a signal to detect
convergence or stalling.
\end{itemize}

Together, these three mechanisms instantiate a complete SMC sampler
for LLM-driven program search, and the resulting framework admits
formal convergence analysis with finite-sample complexity guarantee. Our main contributions are as follows:

\begin{itemize}[leftmargin=1.5em,itemsep=2pt,topsep=2pt]

\item \textbf{A principled framework for evolutionary program search.}
We introduce \textsc{SMCEvolve}, the first evolving agent
whose components—parent resampling, mutation, and convergence
control—are derived from SMC theory rather than
hand-crafted in isolation.

\item \textbf{Finite-sample convergence analysis.} 
We provide the first convergence guarantee for program evolution, showing
that achieving a target approximation error requires a total
computational cost bounded explicitly in terms of particle budget,
mutation mixing, and reward intensity.

\item \textbf{Empirical validation across diverse domains.}  We
evaluate \textsc{SMCEvolve} on multiple domains spanning math~\citep{novikov2025alphaevolve}, algorithm efficiency~\citep{press2025}, symbolic regression~\citep{shojaee2025llm}, and LLM pretraining~\citep{karpathy_autoresearch_2026}, where it surpasses state-of-the-art evolving
systems on the majority of tasks while consuming fewer
LLM calls under automatic convergence control.

\end{itemize}

%% file: sections/2_method.tex
\begin{figure}[t]
  \centering
\includegraphics[width=\linewidth]{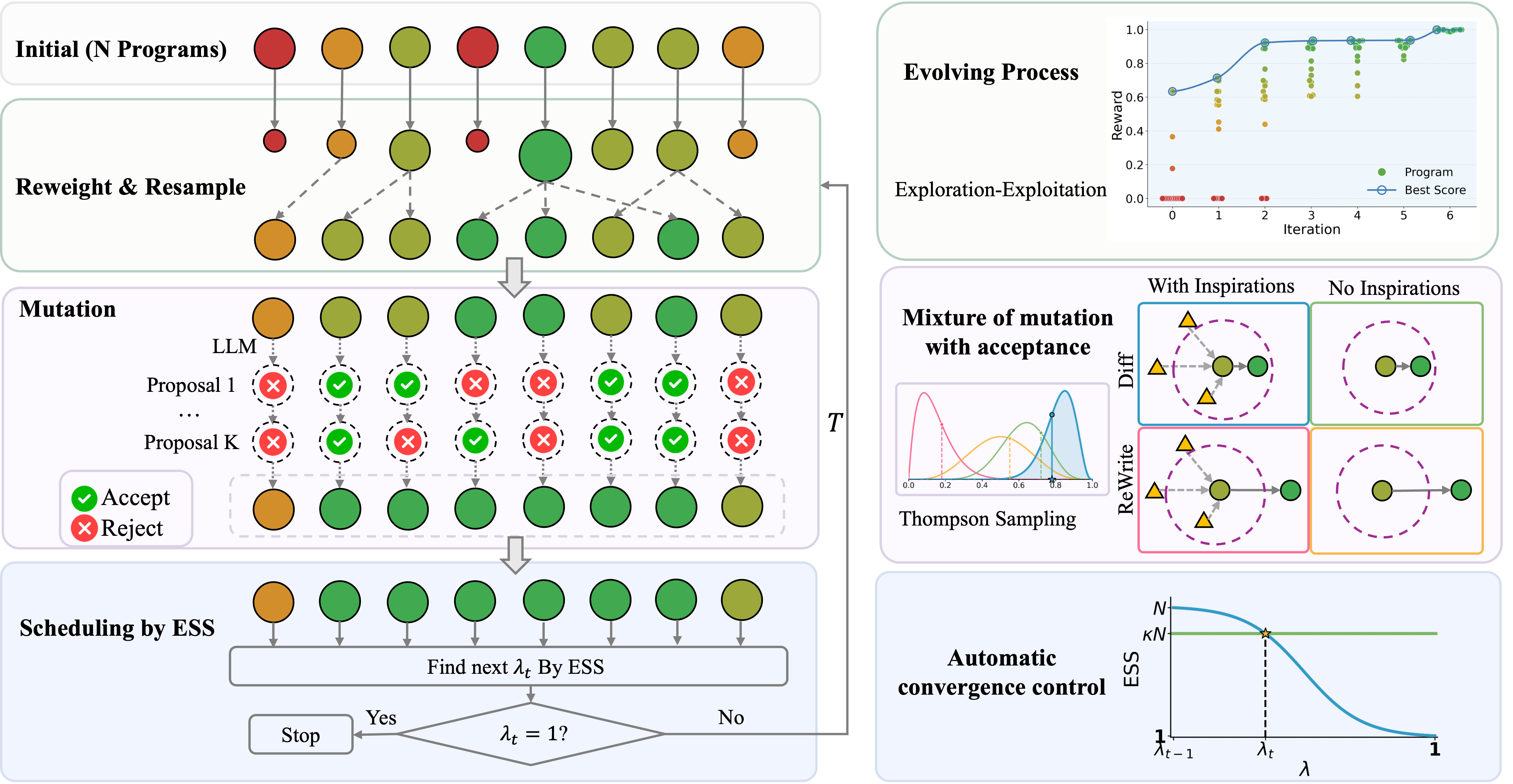}
  \caption{\textbf{Overview of \textsc{SMCEvolve}.}
  \emph{Left:} the main loop. Starting from $N$ initial programs, each of $T$ iterations \emph{(i)} reweights and resamples particles, \emph{(ii)} {mutates each parent through a $K$-step MH chain of LLM proposals and accept/reject updates}, and \emph{(iii)} schedules the next $\lambda_t$ via ESS, terminating when $\lambda_t = 1$.
  \emph{Right, top to bottom:} the evolving process on circle packing (best-so-far reward across iterations); the mixture of four mutation kernels selected by Thompson sampling; and the automatic convergence control finding $\lambda_t$ at the intersection of $\mathrm{ESS}(\lambda)$ with the threshold $\kappa N$.}
  \vspace{-.2in}
  \label{fig:main}
\end{figure}
\section{Method: Program Evolution as Sequential Monte Carlo} 
\label{sec:method}

We formulate program search as sampling from a reward-tilted target distribution that balances reward maximization against the LLM prior (\Cref{sec:ideal}), and approximate this intractable target with a Sequential Monte Carlo sampler whose components -- intermediate distributions, importance weights, mutation kernel, and adaptive scheduling -- are derived from SMC theory (\Cref{sec:smc}).

\paragraph{Notation.}
Throughout, let $\calX$ denote the countable program space for a task $q$.
All distributions over $\calX$ are probability mass functions, and integrals
$\int_{\calX} f(x)\,dx$ denote sums over programs. Given an LLM, let $p_{\mathrm{LLM}}(\cdot \mid \calC)$ denote the distribution induced by sampling from the LLM under context $\calC$.
We define $p_0(\cdot\mid q) = p_{\mathrm{LLM}}(\cdot \mid q)$ as
the LLM prior for task $q$, and $Q_t(\cdot\mid x,\calC_t) = p_{\mathrm{LLM}}(\cdot \mid x,\calC_t)$ as the stage-conditional LLM proposal distribution used for mutation, conditioned on the current program $x$ and context $\calC_t$. Let $R:\calX\to\R$ be the reward function, assumed bounded on
$\mathrm{supp}(p_0)$. Write $R_-=\inf_{\mathrm{supp}(p_0)}R$,
$R_+=\sup_{\mathrm{supp}(p_0)}R$, and $\Delta_R=R_+-R_-<\infty$.

\subsection{Reward-Tilted Target Distribution}
\label{sec:ideal}

Across various domains including mathematical discovery~\citep{georgiev2025mathematical},  symbolic regression~\citep{shojaee2024llm}, materials design~\citep{abhyankarllema}, scientific discovery shares 
a common structure: a search problem in program space $\calX$ guided by an evaluator, where the goal 
is to find the program that maximize the reward $R(x)$. However, the program 
space is discrete and large, making direct maximization intractable. 
We therefore leverage an LLM prior $p_0$ to guide the search toward plausible programs. To balance reward maximization with adherence to the prior, we formulate LLM-driven program search as the following KL-regularized variational optimization problem:
\begin{equation}\label{eq:variational}
\max_{p} \; \E_{x \sim p}\!\big[R(x)\big] - \frac{1}{\beta}\, D_{\mathrm{KL}}\!\big(p \,\|\, p_0\big),
\end{equation}
where the maximization is over all probability mass functions on $\calX$ that are absolutely continuous with respect to $p_0$. The following result characterize the optimal solution.
\begin{lemma}[Reward-tilted target distribution]\label{def:ideal}
The unique maximizer of~\eqref{eq:variational} is
\begin{equation}\label{eq:pstar}
\pstar(x \mid q) \;\triangleq\; \frac{1}{Z(q)}\, \pzero(x \mid q)\, e^{\beta R(x)},
\end{equation}
where $Z(q) = \int \pzero(x' \mid q)\, e^{\beta R(x')}\, dx'$ is the partition function.
\end{lemma}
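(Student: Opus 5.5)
The plan is the standard Gibbs variational argument: rewrite the objective in~\eqref{eq:variational} as a negative KL divergence to $\pstar$ plus a constant, then use nonnegativity of KL and its equality case.

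\textbf{Well-definedness.} Since $R$ is bounded on $\mathrm{supp}(\pzero)$, we have $e^{\beta R_-}\le \int \pzero(x'\mid q)e^{\beta R(x')}\,dx' \le e^{\beta R_+}$. Hence $Z(q)\in(0,\infty)$ and $\pstar$ in~\eqref{eq:pstar} is a genuine probability mass function. It has the same support as $\pzero$, so it is feasible, and $\log(\pstar/\pzero)=\beta R-\log Z(q)$ is bounded on the support.

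\textbf{Key identity.} For any feasible $p$ (absolutely continuous with respect to $\pzero$), on $\mathrm{supp}(p)$ we can write $\log p - \log \pzero = \log p - \log \pstar + \beta R - \log Z(q)$. Taking expectations under $p$ and dividing by $\beta$ gives
\begin{equation*}
\E_{p}[R] - \frac{1}{\beta}D_{\mathrm{KL}}(p\,\|\,\pzero) = \frac{1}{\beta}\log Z(q) - \frac{1}{\beta}D_{\mathrm{KL}}(p\,\|\,\pstar).
\end{equation*}
Gibbs' inequality then gives $D_{\mathrm{KL}}(p\,\|\,\pstar)\ge 0$, with equality iff $p=\pstar$. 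So the objective is at most $\frac{1}{\beta}\log Z(q)$, this value is attained exactly at $p=\pstar$, and that maximizer is unique. (Here $\beta>0$ is implicit.)

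\textbf{Main obstacle.} The delicate step is justifying the expectation split on a countable space, where the terms could be infinite. Because $R$ is bounded, $\E_p[R]$ is finite for every feasible $p$. The log-ratio $\log(\pstar/\pzero)$ is also bounded, so the decomposition is a sum of one finite term and one term in $[0,\infty]$, and no $\infty-\infty$ issue arises.

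When $D_{\mathrm{KL}}(p\,\|\,\pzero)=\infty$, the objective equals $-\infty$, while $D_{\mathrm{KL}}(p\,\|\,\pstar)$ differs from it only by a finite amount and so is also infinite. The identity therefore still holds, and such $p$ cannot be maximizers.

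I would state this carefully by summing the nonnegative parts first, and conclude using strict convexity or the equality case of Jensen's inequality in Gibbs' inequality to obtain uniqueness.
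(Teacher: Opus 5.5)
Your proof is correct, and it takes a different route from the paper. The paper (Appendix~\ref{app:derivation}) forms a Lagrangian with a multiplier $\mu$ for the normalization constraint, sets the functional derivative to zero, and solves $\log p(x) = \beta R(x) + \log \pzero(x\mid q) - (1+\mu\beta)$. That derivation \emph{discovers} the exponential-tilt form without guessing it. On its own, however, it only establishes stationarity. It treats $p$ as an unconstrained interior point, ignoring $p(x)\ge 0$ and the points where $\log p$ is singular, and it does not address infinite sums or infinite KL on a countable space. It also leaves global optimality and uniqueness implicit; these would need the strict concavity of the objective, which the paper does not invoke. Your verification argument needs the answer in advance. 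In exchange, the exact identity $\E_p[R] - \beta^{-1}D_{\mathrm{KL}}(p\,\|\,\pzero) = \beta^{-1}\log Z(q) - \beta^{-1}D_{\mathrm{KL}}(p\,\|\,\pstar)$, together with Gibbs' inequality, gives existence, global optimality, uniqueness, and the optimal value $\beta^{-1}\log Z(q)$ in one step. It also quantifies the suboptimality of any feasible $p$ as exactly $\beta^{-1}D_{\mathrm{KL}}(p\,\|\,\pstar)$. Your extended-real bookkeeping (one bounded term plus one term in $[0,\infty]$, so no $\infty-\infty$ can occur) is what makes the argument fully rigorous on a countable program space.
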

The proof follows from standard calculus of variations: setting the functional derivative of the Lagrangian (with a normalization constraint) to zero yields~\eqref{eq:pstar} directly, see Appendix~\ref{app:derivation} for details. The bounded-oscillation assumption ensures that $e^{\beta R_-} \le Z(q) \le e^{\beta R_+} < \infty$, so~\eqref{eq:pstar} is a well-defined probability mass function on $\calX$ that is absolutely continuous with respect to $p_0$.

The parameter $\beta$ interpolates between the LLM prior ($\beta \to 0$) and pure reward maximization ($\beta \to \infty$). Sampling from $p^*$ is intractable, as the normalization term $Z(q)$ requires summing over all programs in an extreme large program space.

\subsection{Sequential Monte Carlo Samplers}
\label{sec:smc}

To sample from the intractable target $p^*(\cdot \mid q)$, we turn to Sequential Monte Carlo (SMC) method~\citep{del2006sequential}. It bridges $p_0$ to $p^*$ through a sequence of \emph{bridging distributions} $p_0, p_1, \ldots, p_T = p^*$, with $p_t$ written as $p_t(x) = \gamma_t(x) / Z_t$ for an unnormalized density $\gamma_t$ and normalizing constant $Z_t$.
The sampler maintains $N$ particles $\{x_{t-1}^{(n)}\}_{n=1}^N$ approximating $p_{t-1}$, and advances them as samples from $p_t$ by repeating three operations: \emph{reweighting} each particle by an importance weight, \emph{resampling} them to discard low-weight particles, and \emph{mutating} each particle to inject diversity.
The reweighting step assigns the importance weight as~\citep{dai2022invitation, del2006sequential} 
\begin{equation}\label{eq:smc-weight-general}
w_t(x_{t-1}, x_t) \;=\; \frac{\gamma_t(x_t)\, L_{t-1}(x_t, x_{t-1})}{\gamma_{t-1}(x_{t-1})\, M_t^K(x_{t-1}, x_t)},
\end{equation}
where $(x_{t-1}, x_t) = (x_{t-1}^{(n)}, x_t^{(n)})$ for each particle $n$, $M_t^K(x_{t-1}, \cdot)$ is the \emph{stage forward kernel}---the $K$-fold composition of a one-step kernel $M_t$ used to mutate each particle from stage $t-1$ to stage $t$---and $L_{t-1}(x_t, \cdot)$ is an auxiliary \emph{backward kernel} introduced by the SMC framework to make~\eqref{eq:smc-weight-general} tractable. Crucially, the effectiveness of SMC hinges on how these components are \emph{jointly designed for the target problem}: the bridging distributions $\{p_t\}$ determine the annealing path, $M_t$ controls exploration and proposal quality, and the backward kernel $L_{t-1}$ stabilizes importance weighting.

\paragraph{Kernel properties required by SMC.}
A forward kernel $M_t$ acts on distributions in two cases: starting from $p_t$,
applying $M_t$ should keep the distribution at $p_t$
(\emph{$p_t$-invariance}); starting away from $p_t$, iterating $M_t$ should
drive the distribution toward $p_t$ (\emph{ergodicity}).
Invariance ensures that the move step,
applied to a particle already distributed as $p_t$, does not drift the
distribution away from $p_t$. {But invariance alone is not enough for finite-budget search. 
The identity kernel, for instance, is perfectly $p_t$-invariant but simply copies particles and provides no exploration after resampling. 
Uniform ergodicity rules out such ineffective kernels by giving a quantitative total-variation bound after $K$ mutation steps; smaller $\rho$ corresponds to faster mixing.}

\begin{definition}[$p_t$-invariance]
\label{def:invariance}
A Markov kernel $M_t$ on $\calX$ is \emph{$p_t$-invariant} if
applying $M_t$ to a state distributed as $p_t$ leaves the distribution
unchanged:
\begin{equation}
\label{eq:invariance}
\int_{\calX} p_t(x)\, M_t(x, y)\, dx \;=\; p_t(y), \qquad \forall\, y \in \calX.
\end{equation}
\end{definition}

\begin{definition}[Uniform ergodicity; {\citealp[Ch.~16]{meyn2012markov}}]
\label{def:uniform_ergodicity}
A Markov kernel $M_t$ with invariant distribution $p_t$ is \emph{uniformly
ergodic} if there exist constants $C \geq 1$ and $\rho \in (0, 1)$ such that,
for every $x \in \calX$ and every $K \geq 1$,
\begin{equation}
\label{eq:uniform_ergodicity_def}
\big\| M_t^K(x, \cdot) - p_t \big\|_{\mathrm{TV}} \;\leq\; C \cdot \rho^K,
\end{equation}
where $\|\mu - \nu\|_{\mathrm{TV}} = \sup_{A \subseteq \calX} |\mu(A) - \nu(A)|$ is
the total variation distance.
\end{definition}

\subsection{\textsc{SMCEvolve}: LLM-driven program evolution via SMC}

{Building on this perspective, we instantiate SMC for program evolution by designing problem-specific components tailored to LLM-based search. Our method, \textsc{SMCEvolve}, integrates (i) \emph{adaptive parent resampling} aligned with reward shaping, (ii) a \emph{mixture of mutation kernels with acceptance correction} to balance exploration and validity, and (iii) \emph{automatic convergence control} via adaptive annealing. These designs are summarized in \Cref{fig:main}, with a case study on circle packing shown in \Cref{fig:case_study}.}

\begin{figure}[t]
    \centering
    \begin{subfigure}[b]{0.25\textwidth}
        \centering
        \includegraphics[width=\linewidth]{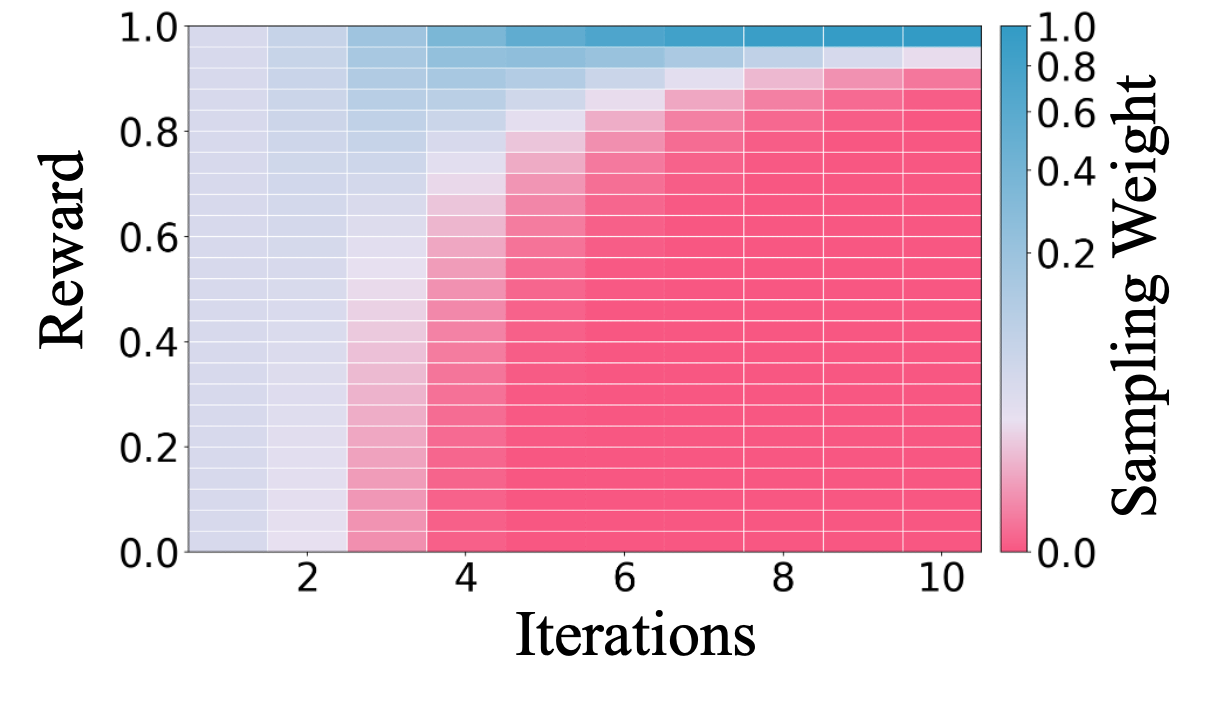}
        \caption{Parent resampling}
        \label{fig:parent_score}
    \end{subfigure}
    \hfill
    \begin{subfigure}[b]{0.32\textwidth}
        \centering
        \includegraphics[width=\linewidth]{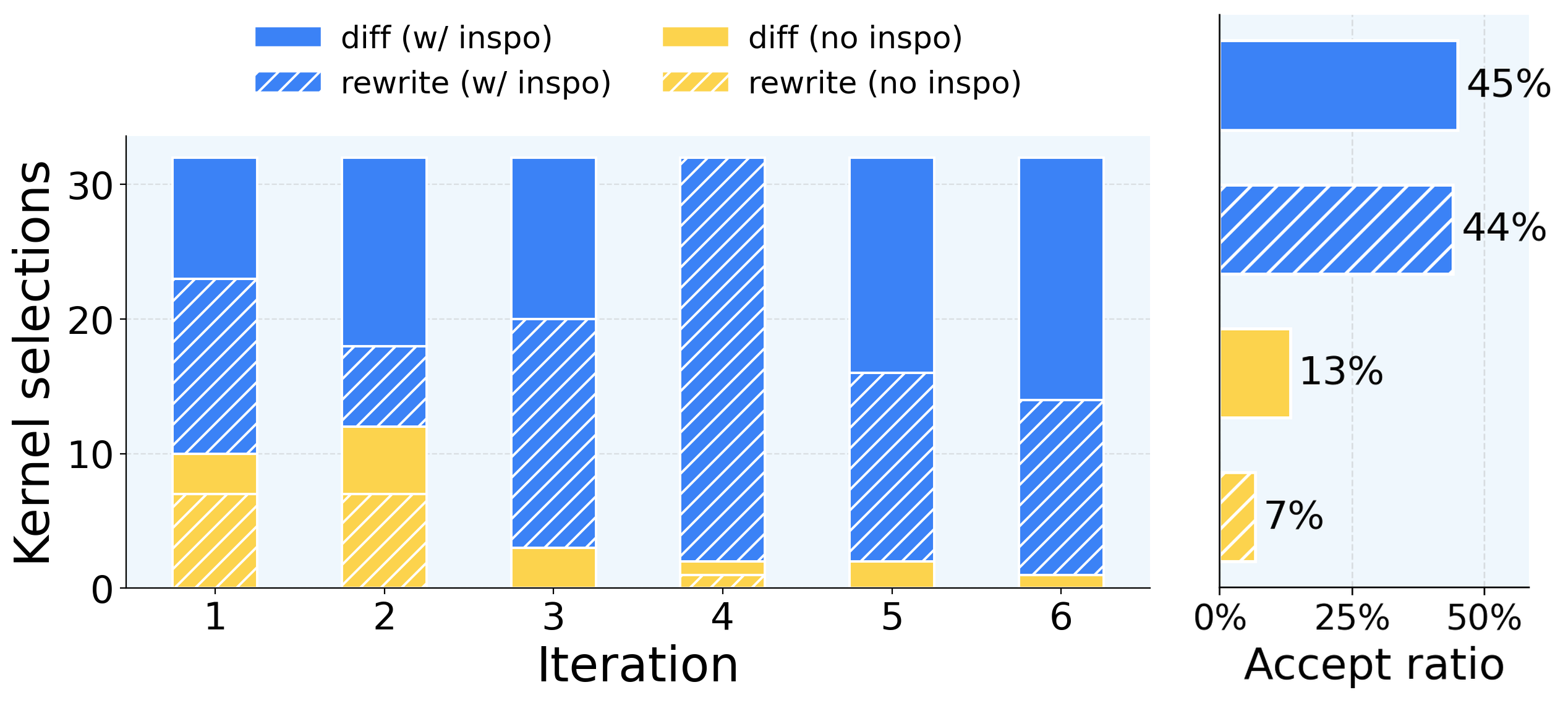}
        \caption{Kernel selection \& acceptance}
        \label{fig:kernel}
    \end{subfigure}
    \hfill
    \begin{subfigure}[b]{0.4\textwidth}
        \centering
        \includegraphics[width=\linewidth]{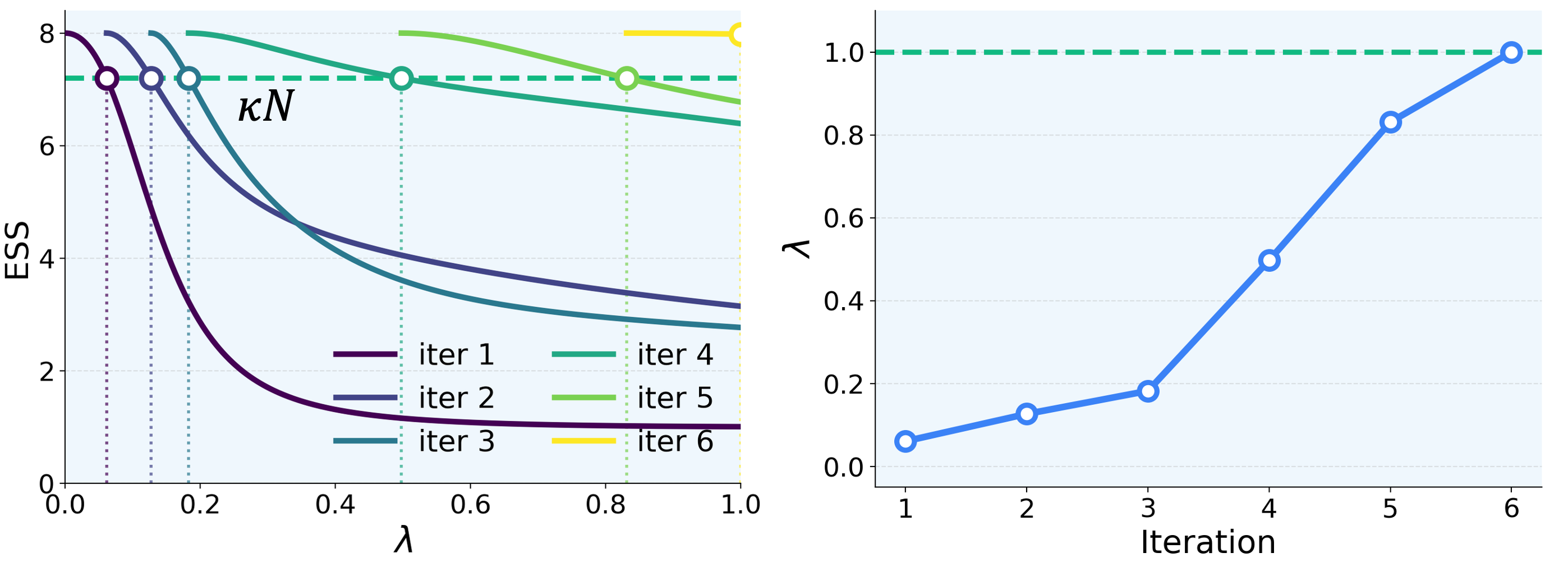}
        \caption{$\mathrm{ESS}$ curves \& $\lambda_t$ schedule}
        \label{fig:lambda_ess}
    \end{subfigure}

    \caption{\textbf{Case study of \textsc{SMCEvolve} on circle packing}, illustrating the three mechanisms.
    \textbf{(a)} Resampling probability of each particle (ranked by reward) across iterations.
    \textbf{(b)} Per-iteration selection counts and cumulative MH acceptance rates of the four mutation kernels $M^{(1)},\dots,M^{(4)}$.
    \textbf{(c)} The ESS curves $\mathrm{ESS}(\lambda)$ at each iteration (left) and the resulting schedule $\lambda_t$ (right); each $\lambda_t$ is set where $\mathrm{ESS}(\lambda)$ crosses the threshold $\kappa N$ (green dashed).}
    \label{fig:case_study}
     \vspace{-.2in}
\end{figure}

\paragraph{Adaptive parent resampling.}
We construct the bridging distributions $\{p_t\}_{t=0}^T$ as a geometric annealing path between the LLM prior $p_0(\cdot \mid q)$ and the target $p^*(\cdot \mid q)$.
At iteration $t$, we set the unnormalized density to, 
\begin{equation}\label{eq:bridging}
\gamma_t(x) \;=\; p_0(x \mid q)\, \exp\!\big(\beta_t R(x)\big), \qquad \beta_t \;=\; \lambda_t\, \beta,
\end{equation}
where the temperature schedule $0 = \lambda_0 < \lambda_1 < \cdots < \lambda_T = 1$ interpolates the reward intensity from zero to its target value $\beta$.
The endpoints recover $p_0(\cdot \mid q)$ at $t=0$ and $p^*(\cdot \mid q)$ at $t=T$ from~\eqref{eq:pstar}.

Next, we specialize the backward kernel $L_{t-1}$ as the time reversal~\citep{dai2022invitation} of $M_t^K$ under $p_t$,
\begin{equation}\label{eq:backward}
L_{t-1}(x_t, x_{t-1}) \;=\; \frac{p_t(x_{t-1})\, M_t^K(x_{t-1}, x_t)}{p_t(x_t)}.
\end{equation}
Substituting~\eqref{eq:backward} into~\eqref{eq:smc-weight-general}, the forward and backward kernels cancel, leaving a weight that is independent of $x_t$ and depends only on the parent's reward (full derivation in Appendix~\ref{app:weights}):
\begin{equation}\label{eq:weight}
w_t(x_{t-1},x_t) \;=\; \frac{\gamma_t(x_{t-1})}{\gamma_{t-1}(x_{t-1})} \;=\; \exp\!\big((\beta_t - \beta_{t-1})\, R(x_{t-1})\big).
\end{equation}
Normalizing across the $N$ particles, the resampling probability of particle $n$ is the temperature-based softmax over particle rewards,
\begin{equation}\label{eq:softmax}
W_t^{(n)} \;=\; \frac{\exp\!\big((\beta_t - \beta_{t-1})\, R(x^{(n)}_{t-1})\big)}{\sum_{m=1}^{N} \exp\!\big((\beta_t - \beta_{t-1})\, R(x^{(m)}_{t-1})\big)},
\end{equation}
where $\beta_t - \beta_{t-1}>0$ acts as inverse temperature: small increments flatten the softmax toward uniform sampling, while large increments sharpen it onto top-reward particles.

\textit{Case study.} \Cref{fig:parent_score} illustrates this transition on circle packing. In early iterations the increment $\beta_t - \beta_{t-1}$ is small, so reward exerts little influence on the resampling weight and sampling probabilities are nearly uniform across particles; in later iterations the increment grows, and the weight concentrates resampling on high-reward parents.

\paragraph{Design of the forward kernel.}
Leveraging the in-context learning capability of LLMs, we design the one-step forward/mutation kernel $M_t$ around an LLM-based proposal
$x'\sim Q_t(\cdot\mid x,\mathcal C_t)$. However, the LLM proposal kernel $Q_t$ alone may not satisfy the two properties required for SMC convergence: $p_t$-invariance~(\Cref{def:invariance}) and uniform ergodicity~(\Cref{def:uniform_ergodicity}). To address this challenge, we augment the proposal mechanism with a mixture of mutation kernels and a Metropolis–Hastings (MH) accept/reject correction applied to multiple candidate samples. Specifically, each parent particle is propagated by repeatedly applying a one-step mutation kernel for $K$ iterations, yielding the resulting $K$-step mutation block $M_t^K$.

{\textbf{MH accept/reject for $p_t$-invariance.} Invariance ensures that the mutation step does not drift the particle population away from the current bridge $p_t$.
We achieve this via the MH construction (proof in Appendix~\ref{app:MH}): each candidate $x'$ proposed by the LLM is accepted with probability $\alpha_t(x, x')$, and otherwise the parent $x$ is kept.
Iterating this proposal-and-acceptance step $K$ times per particle yields a $K$-step MH chain, whose final state is taken as the mutated particle.} {The full MH ratio requires evaluating the LLM density in both forward and reverse directions, which is inaccessible through black-box APIs. We therefore adopt the reward-only criterion as an approximation:
\begin{equation}\label{eq:mh_simple}
\alpha_t(x, x') = \min\!\big\{1,\; e^{\beta_t(R(x') - R(x))}\big\}.
\end{equation}
The deviation is discussed in Appendix~\ref{app:kernel}. Intuitively, this rule always accepts reward-improving proposals and accepts reward-decreasing ones with probability that decays exponentially in the reward gap. {Under the ideal MH kernel, larger $K$ gives the chain more time to mix toward $p_t$ at the cost of more LLM calls.}
}

{\textbf{LLM proposal with context for ergodicity.}
Ergodicity governs how fast newly generated programs spread across the program space, and we accelerate it through the design of the LLM context $\mathcal{C}_t$.
Following the mixture-of-kernels construction~\citep[Sec.~3.2.2]{del2006sequential}, {we instantiate $Q_t$ as a mixture of four LLM proposal modes; composed with MH accept/reject, these modes induce the four one-step mutation kernels shown below.}}
\begin{center}
\small
\begin{tabular}{l|cc}
\toprule
 & \textbf{With inspiration} & \textbf{No inspiration} \\
\midrule
\textbf{Diff}    & $M^{(1)}$: edit parent using peer programs & $M^{(2)}$: edit parent alone \\
\textbf{ReWrite} & $M^{(3)}$: rewrite parent from scratch using peer programs  & $M^{(4)}$: rewrite parent from scratch \\
\bottomrule
\end{tabular}
\end{center}

Diff kernels produce small \texttt{SEARCH/REPLACE} edits, while rewrite kernels
generate the entire program from scratch. Inspiration programs include the top-$k$ programs by reward and $m$ maximally diverse ones, selected by picking programs whose text embeddings are farthest from the parent across the full evaluation history. Crucially, this history 
contains \emph{every} program ever proposed by the LLM, including those 
rejected by the MH step, so that information from rejected proposals is not 
discarded but as inspiration for future mutations.

As the most useful kernel changes over the process of the search, we adaptively select among the four kernels via Thompson sampling~\citep{russo2018tutorial}:
each kernel maintains a Beta posterior over its success rate, and at each step we
sample from every posterior and pick the
kernels that have recently produced high-reward proposals.  Together, these four kernels cover both local
and global moves and both single-parent and cross-program information,
broadening the reachable set from any given particle and accelerating
mixing. 

\textit{Case study.} \Cref{fig:kernel} confirms this behavior on circle packing. The inspiration-based kernels $M^{(1)}$ and $M^{(3)}$ attain substantially higher MH acceptance under~\eqref{eq:mh_simple} than their no-inspiration counterparts $M^{(2)}$ and $M^{(4)}$, and Thompson sampling progressively selects them more often than the no-inspiration ones.

{\textbf{Automatic convergence control.} 
We now describe an adaptive strategy for controlling the evolution iterations. Towards that goal, rather than pre-specifying the inverse temperature sequence $(\lambda_t)_{t=0}^T$, 
we iteratively determine $\lambda_t$ adaptively using the effective sample size 
(ESS)~\citep{del2006sequential} defined as 
\begin{equation}
\label{eq:ess}
\text{ESS}(\lambda) = 
\frac{\bigl(\sum_{n=1}^N u^{(n)}(\lambda)\bigr)^2}
     {\sum_{n=1}^N \bigl(u^{(n)}(\lambda)\bigr)^2}, 
\qquad 
u^{(n)}(\lambda) = e^{(\lambda - \lambda_{t-1}) \beta R(x_{t-1}^{(n)})}.
\end{equation}
Intuitively, $\text{ESS}(\lambda)$ measures how many particles 
effectively contribute after reweighting—close to $N$ when weights are 
balanced, and dropping toward $1$ as they concentrate on a few high-reward 
particles. Since increasing the temperature $\lambda$ sharpens the reward tilt and decreases $\text{ESS}(\lambda)$, we set $\lambda_t$ to be the largest $\lambda$ 
whose reweighting still keeps a target fraction $\kappa \in (0,1)$ of particles 
effective. This can be computed via bisection on
$\text{ESS}(\lambda_t) = \kappa N$ starting from $\lambda_{t-1}$, and
iterate until $\lambda_t = 1$ (set directly when
$\text{ESS}(1) \ge \kappa N$). The total number of iterations $T$ is therefore determined automatically and adapts to the difficulty of the task.
}

To illustrate, \Cref{fig:lambda_ess} shows how the schedule is built on circle packing. At each iteration we compute the curve $\mathrm{ESS}(\lambda)$ as a function of $\lambda$ (left panel) and get the next $\lambda_t$ at its intersection with the threshold $\kappa N$ (green dashed line). The resulting schedule (right panel) grows slowly at first—small steps because the early ESS curves decay sharply—and accelerates as the curves flatten in later iterations, reaching $\lambda_t = 1$ at automatic termination.

\paragraph{The complete algorithm.}
\Cref{alg:smc} assembles the three designs into a single loop:
at each iteration $t$, we (i) reweight and resample particles via the
temperature-based softmax~\eqref{eq:softmax}, (ii) mutate each parent
through a $K$-step MH chain with {the Thompson-sampled mixture of LLM proposal modes}, and (iii) advance $\lambda_t$ by ESS-based
bisection~\eqref{eq:ess}, terminating once $\lambda_t = 1$. The total
cost is $N \cdot K \cdot T$ LLM calls, where $T$ is determined
adaptively rather than fixed in advance. More implementation details,
including island-based parallelism, are in
Appendix~\ref{app:implementation}.

\begin{algorithm}[t]
\caption{\textsc{SMCEvolve}}
\label{alg:smc}
\begin{algorithmic}[1]
\REQUIRE $N$ particles, {$K$ sequential LLM proposals per particle}, ESS threshold $\kappa \in (0,1)$, prior $p_0(\cdot \mid q)$, LLM proposal $Q_t(\cdot \mid x,\calC_t)$, reward $R(\cdot)$, target inverse temperature $\beta$
\ENSURE Particles approximating $p^*(\cdot \mid q)$
\FOR{$n = 1, \ldots, N$}
    \STATE $x_0^{(n)} \sim p_0(\cdot \mid q)$; \; $R^{(n)} \gets R(x_0^{(n)})$ \hfill \# Initialize from LLM prior
\ENDFOR
\STATE $\lambda_0 \gets 0$, $t \gets 0$
\REPEAT
    \STATE $t \gets t + 1$
    \STATE Determine $\lambda_t$ via ESS bisection~\eqref{eq:ess}; \; $\Delta\beta_t \gets (\lambda_t - \lambda_{t-1})\beta$ \hfill \# Adaptive temperature
    \STATE Compute weights $W_t^{(n)}$ via~\eqref{eq:softmax}; \; resample ancestor indices $\{A_t^{(n)}\}_{n=1}^{N}$ \hfill \# Parent resampling
    \FOR{$n = 1, \ldots, N$}
        \STATE $z \gets x_{t-1}^{(A_t^{(n)})}$ \hfill \# Resampled parent (current state of MH chain)
        \FOR{$k = 1, \ldots, K$}
            \STATE $x' \sim Q_t(\cdot \mid z, \calC_t)$ \hfill \# LLM proposes from current state
            \STATE With probability $\alpha_t(z, x')$ via~\eqref{eq:mh_simple}: $z \gets x'$ \hfill \# MH accept/reject
        \ENDFOR
        \STATE $x_t^{(n)} \gets z$; \; $R^{(n)} \gets R(x_t^{(n)})$
    \ENDFOR
\UNTIL{$\lambda_t = 1$} \hfill \# Automatic termination
\end{algorithmic}
\end{algorithm}

\paragraph{A unified exploration--exploitation transition.}
The three core designs of \textsc{SMCEvolve} are not independent: they 
are coupled through the single temperature parameter $\beta_t$, which 
governs the exploration--exploitation trade-off across all of them. 
At small $\beta_t$, adaptive parent resampling~\eqref{eq:softmax} draws 
nearly uniformly across particles, MH acceptance~\eqref{eq:mh_simple} 
admits almost any LLM proposal, and ESS-driven 
scheduling~\eqref{eq:ess} takes small steps in $\beta_t$---together 
driving exploration. At large $\beta_t$, parent resampling concentrates 
on high-reward particles, MH acceptance admits only reward-improving 
proposals, and ESS-driven scheduling takes larger steps---together 
driving exploitation. Through this single parameter, the three designs 
are no longer independent choices but coordinated mechanisms of one 
exploration--exploitation schedule that emerges from the SMC framework 
itself.

\paragraph{Existing frameworks as special cases.}
Existing agents such as AlphaEvolve~\citep{novikov2025alphaevolve} and 
ShinkaEvolve~\citep{lange2025shinkaevolve} arise as special cases of our 
framework when each component reduces to a specific choice: adaptive 
parent resampling reduces to a fixed inverse temperature 
$\beta_t \equiv \beta$, mixture mutation with acceptance reduces to a 
single LLM proposal ($K = 1$, $\alpha_t \equiv 1$) with unconditional 
acceptance, and automatic convergence control reduces to a pre-fixed 
iteration count $T$. This explains why existing frameworks work, while \textsc{SMCEvolve} provides a more general and unified principle for designing evolving agents.

%% file: sections/3_convergence.tex
\section{Convergence Analysis}
\label{sec:convergence}

Direct sampling from the reward-tilted target $p^*$ is intractable. The SMC
formulation of \textsc{SMCEvolve} instead constructs an interacting particle
approximation through annealed reweighting, resampling, and mutation. We measure
this approximation in the standard SMC sense: for bounded test functions $f$, the
terminal empirical measure $\eta_T^N = N^{-1}\sum_n \delta_{x_T^{(n)}}$ should
satisfy $\eta_T^N(f) \approx p^*(f)$. The result below gives a finite-sample
LLM-call budget for achieving this approximation under explicit bridge
regularity and mutation-kernel assumptions.

Let $\calB := NTK$ denote the total number of LLM calls, accounting for $N$
particles, $T$ annealing stages, and $K$ applications of the one-step mutation
kernel per particle. The
following theorem states the resulting finite-sample complexity bound.

\begin{theorem}[Finite-sample complexity of \textsc{SMCEvolve}]
\label{thm:complexity}
Suppose the reward has finite positive oscillation $\Delta_R$ on the support of
$p_0(\cdot\mid q)$. Let the schedule $\{\lambda_t\}_{t=0}^T$ be produced by the
ESS bisection rule~\eqref{eq:ess} with threshold $\kappa \in (0, 1)$, and
assume the resulting adjacent bridges satisfy
$\sup_{1 \le t \le T}\|p_t / p_{t-1}\|_{L^2(p_{t-1})}^2 \le \kappa^{-1}$.
Assume also that for each $t$, the one-step mutation kernel $M_t$ is
$p_t$-invariant and uniformly ergodic with rate $\rho_t$, and let
$\rho = \max_{1 \le t \le T} \rho_t \in (0, 1)$.

Then for any fixed bounded test function $f: \calX \to \R$ with
$\|f\|_\infty \le 1$ and any $\epsilon > 0$, there exists a choice of
$(N,T,K)$ such that the empirical measure
$\eta_T^N = \frac{1}{N}\sum_n \delta_{x_T^{(n)}}$ induced by the particles
$\{x_T^{(n)}\}$ satisfies\footnote{The constant $3/4$ is the high-probability guarantee in Theorem~1 of Marion et al.~\citep{marion2023finite}; it can be amplified to $1 - \delta$ at a $\log(1/\delta)$ multiplicative cost by running the sampler $O(\log(1/\delta))$ times and taking the median of the estimates.}
\begin{equation}
\label{eq:concentration}
\mathbb{P}\!\left(\big| \eta_T^N(f) - p^*(f) \big| \le \epsilon \right) \;\ge\; \tfrac{3}{4}.
\end{equation}

The corresponding LLM-call budget $\calB = NTK$ satisfies
\begin{equation}
\label{eq:main-bound}
\calB \;=\; \widetilde{\mathcal{O}}\!\left(
\frac{\epsilon^{-2} \vee \kappa^{-1}}{1 - \rho} \cdot \beta \Delta_R
\right),
\end{equation}
where $\widetilde{\mathcal{O}}$ suppresses polylogarithmic factors.
\end{theorem}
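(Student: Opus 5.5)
The plan is to reduce the statement to the finite-sample SMC bound of Marion et al.~\citep{marion2023finite}, Theorem~1. That result gives the $3/4$-probability guarantee~\eqref{eq:concentration} for an annealed SMC sampler under two conditions. First, the adjacent bridges must have bounded $L^2$ density ratios. Second, the mutation kernels must be invariant and mix well enough that each stage's output is close to $p_t$ in total variation.

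Under those conditions the required particle count is $N=\widetilde{\mathcal{O}}(\epsilon^{-2}\vee \kappa^{-1})$, up to constants depending on the $L^2$ bound $\kappa^{-1}$. The number of mutation steps must be large enough that $C\rho^K$ is below a threshold polynomial in $\epsilon/T$. The remaining work is to supply these hypotheses and to count $T$ and $K$ explicitly.

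\textbf{Step 1: checking the hypotheses.} The $L^2$ condition $\sup_t\|p_t/p_{t-1}\|^2_{L^2(p_{t-1})}\le\kappa^{-1}$ is assumed directly. I will note that it matches the ESS rule, because the population version of $N/\mathrm{ESS}$ is exactly this second moment. The kernel conditions, $p_t$-invariance (\Cref{def:invariance}) and uniform ergodicity (\Cref{def:uniform_ergodicity}) with rate $\rho_t\le\rho$, are also assumed. From $\|M_t^K(x,\cdot)-p_t\|_{\mathrm{TV}}\le C\rho^K$, taking
\[
K=\left\lceil \frac{\log(CT/\epsilon)}{\log(1/\rho)}\right\rceil\le \left\lceil\frac{\log(CT/\epsilon)}{1-\rho}\right\rceil
\]
makes the per-stage mixing error at most $\epsilon/T$. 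Here I use $\log(1/\rho)\ge 1-\rho$. This accounts for the $(1-\rho)^{-1}$ factor, with the logarithm absorbed into $\widetilde{\mathcal{O}}$. Summed over the $T$ stages, the mixing errors telescope into an $\mathcal{O}(\epsilon)$ total-variation error, which combines with the Monte Carlo error in Marion et al.'s decomposition.

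\textbf{Step 2: bounding $T$, the main obstacle.} I need $T=\widetilde{\mathcal{O}}(\beta\Delta_R)$, with at most logarithmic dependence on $\kappa$ so that the product stays in the claimed form. The approach is to lower-bound each ESS step. For a bridge increment $\delta=(\lambda_t-\lambda_{t-1})\beta$, the weights $u^{(n)}$ lie in $[e^{\delta R_-},e^{\delta R_+}]$, so
\[
\mathrm{ESS}\ge N\,e^{-2\delta\Delta_R}
\]
by bounding the ratio of the squared sum to the sum of squares. Hence any $\delta\le \log(1/\kappa)/(2\Delta_R)$ keeps $\mathrm{ESS}\ge\kappa N$. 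The bisection chooses the largest admissible $\lambda$, so every non-terminal step satisfies $\delta_t\ge \log(1/\kappa)/(2\Delta_R)$. The increments sum to $\beta$, which gives
\[
T\le 1+\frac{2\beta\Delta_R}{\log(1/\kappa)}=\mathcal{O}(\beta\Delta_R)
\]
for fixed $\kappa$. I will present this count carefully. The ESS rule is random and data-dependent, so I will note that the bound holds deterministically for every realization of the particles, and that this is what allows Marion et al.'s fixed-schedule result to be applied conditionally on the schedule.

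\textbf{Step 3: assembling the budget.} Multiplying the three choices gives
\[
\calB=NTK=\widetilde{\mathcal{O}}\!\left((\epsilon^{-2}\vee\kappa^{-1})\cdot\beta\Delta_R\cdot\frac{1}{1-\rho}\right),
\]
which is~\eqref{eq:main-bound}. The $\log(T/\epsilon)$ factor from $K$ and the $\log(1/\kappa)$ factors are polylogarithmic and are suppressed in $\widetilde{\mathcal{O}}$. The amplification from $3/4$ to $1-\delta$ in the footnote is the standard median trick and needs no separate proof.
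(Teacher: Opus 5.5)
Your reduction to Marion et al.\ and your treatment of the bridge and mixing terms mirror the paper's (S1)--(S2). The problem is how you count $T$. Step~2 rests on the claim that, because $T\le 1+2\beta\Delta_R/\log(1/\kappa)$ holds for every realization, the fixed-schedule theorem can be applied conditionally on the ESS schedule. That does not follow. Each $\lambda_t$ is a function of the stage-$(t-1)$ rewards, so conditioning on the realized schedule conditions on the particles themselves. Already conditioning on $\lambda_1$ generally destroys the iid-$p_0$ law of the initial sample. The imported theorem's hypotheses, iid initialization and a deterministic sequence of bridges, therefore fail under the conditional law. A deterministic bound on the \emph{number} of stages controls the budget, not this dependence; a genuinely ESS-adaptive schedule needs a separate adaptive-SMC argument.

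The paper never bounds the ESS-generated stage count. The $T$ in its budget is Marion et al.'s fixed-schedule count $\lceil\log\Gamma\rceil$, bounded through $\log\Gamma\le\beta\Delta_R$ (\Cref{lem:path_length_reward}). The ESS rule serves only as motivation for the \emph{assumed} bridge bound, and the appendix explicitly declines to derive anything from the data-dependent schedule for this reason. To repair your proof, either follow that route, or turn your computation into a \emph{deterministic} schedule with constant increment $\delta=\log(1/\kappa)/(2\Delta_R)$. The certificate $\|p_t/p_{t-1}\|_\infty\le e^{\delta\Delta_R}$ then gives the bridge bound directly, and the fixed-schedule theorem applies with $T=\lceil 2\beta\Delta_R/\log(1/\kappa)\rceil$.

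There are two smaller corrections. First, $1/\log(1/\kappa)$ is not polylogarithmic. As $\kappa\uparrow 1$ it grows like $1/(1-\kappa)$, while the $\kappa^{-1}$ in \eqref{eq:main-bound} stays near $1$. So your budget matches \eqref{eq:main-bound} only if $\kappa$ is treated as a constant, whereas the paper's stage count carries no $\kappa$ at all. Second, the imported theorem does not add up per-stage TV errors of size $\epsilon/T$. As the paper records, it needs warm-start mixing to precision $\eta_\star=1/(8NT)$ with $\omega_\star=2$, a requirement that scales with the total number $NT$ of particle moves. Take $K=\lceil\log(8CNT)/(1-\rho)\rceil$ as in (S2); this changes only logarithmic factors.

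Your ESS step-size bound itself is correct and worth keeping as a remark. It shows, realization by realization, that bisection terminates within $1+2\beta\Delta_R/\log(1/\kappa)$ stages, which the paper asserts in its regime-of-validity discussion but does not prove.
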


The theorem provides the standard SMC guarantee for the terminal empirical measure. Intuitively, for any fixed bounded statistic $f$, the final particle population yields a high-accuracy estimate of its expectation under the ideal reward-tilted target distribution $p^*$. For example, choosing the normalized reward
$f=(R-R_-)/\Delta_R$ with $\Delta_R>0$, \eqref{eq:concentration} ensures that
the mean reward of the final population is close to the mean reward obtained by
exact sampling from $p^*$. The best-so-far optimization performance is evaluated
empirically in \Cref{sec:experiments}. More generally, since \eqref{eq:concentration} holds for every bounded test function $f$, the final particles $\{x_T^{(n)}\}$ collectively provides a consistent approximation to samples from the target distribution $p^*$.

{
The assumptions in \Cref{thm:complexity} correspond to the two key ingredients underlying finite-sample guarantee: neighboring bridge distributions remain sufficiently close, and the mutation kernel adequately explores the target distribution at each stage. 
In \textsc{SMCEvolve}, the ESS-based bisection rule is designed to adaptively construct
short bridges, while a formal theoretical characterization is left for future work. Similarly, the LLM-based mutation kernel is designed to approximately preserve $p_t$-invariance through an MH-style accept/reject mechanism, while improving exploration through the use of four complementary proposal modes. Under the
full MH ratio, detailed balance and hence $p_t$-invariance holds exactly. In practice, because black-box LLM APIs do not expose the quantities required for the full MH correction, we employ the reward-only approximation described in Appendix~\ref{app:kernel}. The acceptance-rate and kernel-selection diagnostics in \Cref{fig:kernel} suggest that this approximation remains effective in practice and provides a reasonable surrogate for the idealized invariant kernel.
}

\paragraph{Proof sketch.}
{\Cref{thm:complexity} is a specialization of Theorem~1 and Sec.~5.1
of Marion et al.~\citep{marion2023finite} in its mixing-time form. The imported
bound has the form
$\calB = \widetilde{\mathcal{O}}\!\big((\epsilon^{-2} \vee \gamma^2) \cdot \tau \cdot \log\Gamma\big)$,
where $\gamma^2$ measures bridge regularity, $\tau$ measures mutation mixing
time, and $\Gamma = \sup_{x:p_0(x\mid q)>0} p^*(x)/p_0(x)$ measures the length
of the annealing path. We map these three quantities to
\textsc{SMCEvolve} as follows:}
\begin{itemize}[leftmargin=*,topsep=2pt]
\item \emph{(S1: bridge term) $\gamma^2 \le \kappa^{-1}$.} {The theorem assumes that adjacent bridges have controlled $L^2$ ratio;
the ESS bisection rule~\eqref{eq:ess} is the finite-sample mechanism that
targets this condition. For the reward-tilted path, short temperature steps
also satisfy the deterministic certificate
$\|p_t/p_{t-1}\|_\infty \le e^{\Delta\beta_t \Delta_R}$.}
\item \emph{(S2: mutation term) $\tau \le 1/(1-\rho)$ via uniform ergodicity.}
{Uniform ergodicity says that $K$ steps of
the one-step kernel move any starting state toward the current bridge $p_t$ at
rate $\rho^K$. Inverting this decay gives a mixing cost proportional to
$(1-\rho)^{-1}$ up to logarithmic factors.}
\item \emph{(S3: path-length term) $\log\Gamma \le \beta \Delta_R$.}
{Since
$p^*(x)/p_0(x\mid q)=e^{\beta R(x)}/Z(q)$ on the support of $p_0$, bounded
reward oscillation gives
$\Gamma \le e^{\beta R_+}/e^{\beta R_-}=e^{\beta\Delta_R}$ and hence
$\log\Gamma \le \beta\Delta_R$.}
\end{itemize}
{
Full derivations are in
Appendix~\ref{app:imported_complexity}.}

\paragraph{Regime of validity.}
{\Cref{thm:complexity} bounds the product budget $\calB=NTK$. This
product bound should be read together with the usual SMC requirement that each
factor is large enough: enough particles for stable resampling, enough mutation
steps for mixing, and enough bridge stages for controlled reweighting.
Appendix~\ref{app:regime} records these per-factor requirements and the degenerate
cases with $N=1$ and $K=1$.}

%% file: sections/4_experiments.tex
\section{Experiments}
\label{sec:experiments}

\begin{table}[!t]
  \centering
  \caption{Math domain~\citep{novikov2025alphaevolve}: best reward per task, LLM calls in the \textbf{Calls} column.}
  \label{tab:math}
  \small
  \resizebox{\textwidth}{!}{%
  \begin{tabular}{lccc cc}
  \toprule
  \textbf{Task}
    & \textsc{ReEvo} & \textsc{OpenEvolve} & \textsc{ShinkaEvolve}
    & \multicolumn{2}{c}{\textbf{\textsc{SMCEvolve}}} \\
  \cmidrule(lr){5-6}
    & & & & Best & Calls ($\downarrow$) \\
  \midrule
  Circle Packing in Rect. N=21 ($\uparrow$)
    & 0.9085 & 0.9514 & 0.9514 & \lblue \textbf{0.9993} & \lblue 192 \\
  Hexagon Packing N=11 ($\uparrow$)
    & 0.7790 & 0.9398 & 0.9069 & \lblue \textbf{0.9821} & \lblue 176 \\
  Heilbronn Convex N=13 ($\uparrow$)
    & 0.6933 & 0.7178 & 0.7098 & \lblue \textbf{0.9163} & \lblue 176 \\
  Heilbronn Triangle N=11 ($\uparrow$)
    & 0.6862 & 0.8736 & \textbf{0.9046} & \lblue 0.8966 & \lblue 192 \\
  Min-Max-Min Dist. ($n{=}16, d{=}2$) ($\uparrow$)
    & 0.8634 & 0.9915 & 0.9842 & \lblue \textbf{1.0000} & \lblue 160 \\
  Kissing Number ($d{=}11$) ($\uparrow$)
    & 0.1754 & 0.2530 & 0.7386 & \lblue \textbf{0.9005} & \lblue 144 \\
  1st Autocorrelation Inequality ($\uparrow$)
    & 0.9912 & 0.9893 & 0.9971 & \lblue \textbf{0.9974} & \lblue 176 \\
  2nd Autocorrelation Inequality ($\uparrow$)
    & 0.9990 & 1.0079 & 1.0130 & \lblue \textbf{1.0220} & \lblue 128 \\
  3rd Autocorrelation Inequality ($\uparrow$)
    & 0.9636 & 0.0028 & 0.9942 & \lblue \textbf{0.9981} & \lblue 128 \\
  Erd\H{o}s Minimum Overlap ($\uparrow$)
    & 0.9901 & 0.9973 & 0.9990 & \lblue \textbf{0.9998} & \lblue 192 \\
  \bottomrule
  \end{tabular}%
  }
\end{table}

\begin{table}[!t]
  \centering
  \caption{Algorithm efficiency domain~\citep{press2025}: per-task speedup, LLM calls in the \textbf{Calls} column.}
  \label{tab:algotune}
  \small
  \resizebox{\textwidth}{!}{%
  \begin{tabular}{lccc cc}
  \toprule
  \textbf{Task}
    & \textsc{ReEvo} & \textsc{OpenEvolve} & \textsc{ShinkaEvolve}
    & \multicolumn{2}{c}{\textbf{\textsc{SMCEvolve}}} \\
  \cmidrule(lr){5-6}
    & & & & Best & Calls ($\downarrow$) \\
  \midrule
  \texttt{affine\_transform\_2d} ($\uparrow$)
    & 1.0572 & 0.9957 & 1.1032 & \lblue \textbf{4.7160} & \lblue 400 \\
  \texttt{convolve2d\_full\_fill} ($\uparrow$)
    & 0.9963 & 1.0886 & 1.2682 & \lblue \textbf{11.6000} & \lblue 480 \\
  \texttt{eigenvectors\_complex} ($\uparrow$)
    & 1.0688 & 1.0767 & 1.26 & \lblue \textbf{1.8350} & \lblue 288 \\
  \texttt{fft\_cmplx\_scipy\_fftpack} ($\uparrow$)
    & 0.9785 & 1.3590 & 1.1001 & \lblue \textbf{9.2280} & \lblue 464 \\
  \texttt{fft\_convolution} ($\uparrow$)
    & 1.0756 & 1.6226 & 1.9817 & \lblue \textbf{19.9000} & \lblue 320 \\
  \texttt{lu\_factorization} ($\uparrow$)
    & 1.2854 & 1.8120 & 1.4119 & \lblue \textbf{6.1600} & \lblue 368 \\
  \texttt{polynomial\_real} ($\uparrow$)
    & 1.4333 & 1.6908 & \textbf{33.8776} & \lblue 2.4000 & \lblue 336 \\
  \texttt{psd\_cone\_projection} ($\uparrow$)
    & 1.0501 & 0.9753 & 34.5129 & \lblue \textbf{45.8200} & \lblue 416 \\
  \bottomrule
  \end{tabular}%
  }
\end{table}

\begin{table}[!t]
\centering
\caption{Symbolic regression domain~\citep{shojaee2025llm}: best reward per task, LLM calls in the \textbf{Calls} column.}
\label{tab:sr}
\small
\resizebox{\textwidth}{!}{%
\begin{tabular}{llccc cc}
\toprule
\textbf{Split} & \textbf{Task}
  & \textsc{ReEvo} & \textsc{OpenEvolve} & \textsc{ShinkaEvolve}
  & \multicolumn{2}{c}{\textbf{\textsc{SMCEvolve}}} \\
\cmidrule(lr){6-7}
  & & & & & Best & Calls ($\downarrow$) \\
\midrule
\multirow{4}{*}{\texttt{bio\_pop\_growth}}
  & BPG0 & 4.9052 & 5.0918 & 6.4700 & \lblue \textbf{7.1021} & \lblue 384 \\
  & BPG1 & 3.4338 & 2.8472 & 5.8589 & \lblue \textbf{7.1159} & \lblue 336 \\
  & BPG2 & -0.0962 & 5.9800 & 4.2176 & \lblue \textbf{6.4054} & \lblue 128 \\
  & BPG3 & 8.9428 & 8.9602 & 8.8261 & \lblue \textbf{8.9725} & \lblue 304 \\
\midrule
\multirow{4}{*}{\texttt{chem\_react}}
  & CRK0 & 8.9952 & 8.9789 & 8.9846 & \lblue \textbf{8.9985} & \lblue 256 \\
  & CRK1 & 8.9537 & 8.6241 & 8.9830 & \lblue \textbf{8.9972} & \lblue 288 \\
  & CRK2 & 8.9990 & 8.5082 & 8.9906 & \lblue \textbf{9.0000} & \lblue 304 \\
  & CRK3 & 6.8599 & \textbf{8.1226} & 7.5276 & \lblue 7.8772 & \lblue 288 \\
\midrule
\multirow{4}{*}{\texttt{matsci}}
  & MatSci0 & 3.8771 & 4.5670 & 4.2790 & \lblue \textbf{5.5511} & \lblue 304 \\
  & MatSci1 & 0.1518 & 0.1520 & 0.1521 & \lblue \textbf{0.1522} & \lblue 144 \\
  & MatSci2 & 4.4763 & 6.9586 & 6.0224 & \lblue \textbf{8.2500} & \lblue 208 \\
  & MatSci3 & 1.4764 & 3.0537 & 3.1858 & \lblue \textbf{4.4687} & \lblue 304 \\
\midrule
\multirow{4}{*}{\texttt{phys\_osc}}
  & PO0 & 4.1917 & 3.2642 & \textbf{5.6612} & \lblue 4.9567 & \lblue 256 \\
  & PO1 & 3.5738 & 6.5793 & 6.6001 & \lblue \textbf{6.6998} & \lblue 336 \\
  & PO2 & 4.9201 & 7.4520 & \textbf{7.4928} & \lblue 7.4296 & \lblue 368 \\
  & PO3 & 8.9929 & 8.9278 & 8.9486 & \lblue \textbf{8.9985} & \lblue 288 \\
\bottomrule
\end{tabular}%
}
\end{table}

\subsection{Main Results}
\label{sec:exp_main}
We evaluate \textsc{SMCEvolve} on four domains:
\emph{Math} --- combinatorial geometry problems from the
AlphaEvolve benchmark\citep{novikov2025alphaevolve}(Table~\ref{tab:math}); \emph{AlgoTune} --- runtime-optimization tasks~\citep{press2025} spanning numerical computation, signal processing, and linear algebra
(Table~\ref{tab:algotune}); 
\emph{Symbolic Regression} --- problems from
LLM-SRBench~\citep{shojaee2025llm} across physical
oscillators, bacterial population growth, chemical reactions, and materials
science (Table~\ref{tab:sr}); 
and \emph{AutoResearch} --- end-to-end GPT pretraining on
\textsc{TinyStories}~\citep{karpathy_autoresearch_2026}
(Figure~\ref{fig:tinystories-curve}), where reward is
$\max(0,\, 2.0 - \text{val\_bpb})$ on a held-out set; each candidate trains for
60\,s on a single A5000. We compare against \textsc{ReEvo}~\citep{ye2024reevo},
\textsc{OpenEvolve}, and
\textsc{ShinkaEvolve}~\citep{lange2025shinkaevolve}. All methods share an
identical \texttt{gpt-5-mini}+\texttt{gemini-3-flash} ensemble and each cell reports the best score over 3 seeds. Baselines run to
a fixed LLM-call budget (200 for Math and AutoResearch, 400 for Symbolic
Regression, 500 for AlgoTune), while \textsc{SMCEvolve} terminates
automatically via ESS-driven stopping. {\textsc{SMCEvolve} attains the best score on the majority of tasks across all four domains and the highest final reward on AutoResearch. Notably, these gains come with strictly fewer LLM calls---the mean call count under ESS-driven termination is below the fixed baseline budget on every domain (reported in the \textbf{Calls} column).}

\begin{figure}[!t]
    \centering
    \includegraphics[width=0.5\linewidth]{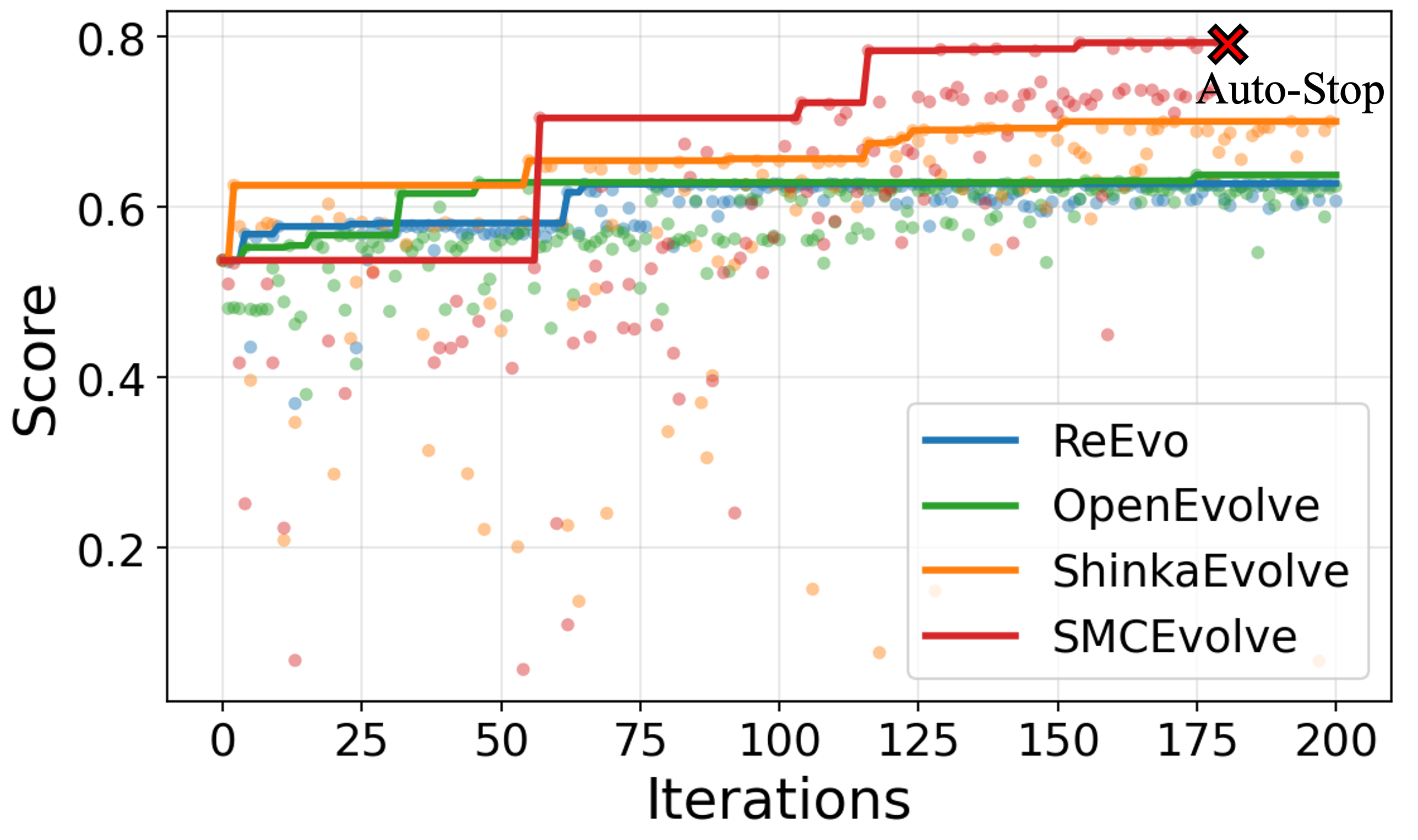}
    \caption{AutoResearch domain~\citep{karpathy_autoresearch_2026}. The SMCEvolve terminated automatically under ESS-control.}
    \label{fig:tinystories-curve}
\end{figure}

%% file: sections/5_ablation.tex
\subsection{Ablation Study}
\label{sec:ablation}

{We ablate the key designs---adaptive parent resampling and the mixture of mutation kernels in \Cref{sec:method}---as well as the hyperparameters $N$ (particle count) and $K$ (MH chain length). Each variant changes a single choice while holding the LLM-call budget fixed (Table~\ref{tab:ablation}). }

\begin{table}[!ht]
\centering
\caption{Ablations on Circle Packing $N{=}21$. Best reward over 3 seeds; baseline in \textbf{bold}.}
\label{tab:ablation}
\footnotesize
\setlength{\tabcolsep}{5pt}
\renewcommand{\arraystretch}{1.1}
\resizebox{\textwidth}{!}{%
\begin{tabular}{l|c|cc|ccccc|cc}
\toprule
& \textbf{Default} & \multicolumn{2}{c|}{\textbf{Parent resampling}} & \multicolumn{5}{c|}{\textbf{Mutation kernel Choice}} & \multicolumn{2}{c}{\textbf{Breadth/depth}} \\
\cmidrule(lr){2-2} \cmidrule(lr){3-4} \cmidrule(lr){5-9} \cmidrule(lr){10-11}
& $N{=}8,K{=}2$ & uniform & greedy & diff & diff & rewrite & rewrite & uniform & $N{=}4$ & $N{=}16$ \\
& mixture & & (argmax) & no insp. & w/ insp. & no insp. & w/ insp. & 4-mix & $K{=}4$ & $K{=}1$ \\
\midrule
\textbf{Best} & \textbf{0.9993} & 0.9514 & 0.9760 & 0.9379 & 0.9868 & 0.9379 & 0.9514 & 0.9929 & 0.9379 & 0.9379 \\
\bottomrule
\end{tabular}%
}
\end{table}
\paragraph{Parent resampling.} {The adaptive resampling interpolates between uniform and greedy selection, transitioning from exploration to exploitation. To verify that this adaptivity is necessary, we replace it with two fixed alternatives at the extremes: \texttt{uniform} ignores the reward signal and \texttt{greedy} (argmax) always select the best. Both degrades, confirming that our adaptive choice is essential.}

\vspace{-.1in}
\paragraph{Mutation kernel choice.} {The mixture targets ergodicity, and Thompson sampling adaptively selects among kernels. We ablate each: replacing the mixture with single kernels, and Thompson with a uniform mixture. Every single kernel or uniform mixture degrades the baseline ($\leq 0.9929$), showing that both kernel diversity and adaptive selection are necessary.}

\vspace{-.1in}
\paragraph{Breadth vs.\ depth.} {Under a fixed budget of $N\times K=16$ LLM calls per iteration, the budget can favor more particles or longer MH chains. Both extremes collapse to $0.9379$: $N{=}4, K{=}4$ starves resampling of diversity; $N{=}16, K{=}1$ leaves chains too short to mix from the parent. The balanced $N{=}8, K{=}2$ split lets resampling explore broadly while each particle refines locally.}

%% file: sections/7_appendix.tex
\newpage
\appendix

\noindent
This appendix provides supplementary material for \textsc{SMCEvolve}. We first
position the method relative to prior work in Appendix~\ref{sec:related}, then
give the derivations and proofs supporting the SMC formulation
(Appendices~\ref{app:derivation}, \ref{app:MH}, \ref{app:weights},
\ref{app:kernel}, and~\ref{app:imported_complexity}), followed by
implementation details (Appendix~\ref{app:implementation}), benchmark/task
specifications (Appendix~\ref{app:task_details}), hyperparameters
(Appendix~\ref{app:hp}), prompts (Appendix~\ref{app:prompts}), and a
visualization of a representative run (Appendix~\ref{app:flow}).

\input{sections/6_related_work}

\section{Derivation of the Reward-Tilted Target Distribution}
\label{app:derivation}

We provide the full derivation of the reward-tilted target distribution in \Cref{def:ideal}.
Starting from the regularized variational objective in~\eqref{eq:variational}, the Lagrangian with the normalization constraint $\int p(x)\,dx = 1$ is
\begin{equation}
\mathcal{L}[p] = \int p(x)\, R(x)\, dx - \frac{1}{\beta}\int p(x) \log \frac{p(x)}{\pzero(x \mid q)}\, dx - \mu\!\left(\int p(x)\, dx - 1\right).
\end{equation}
Setting the functional derivative $\delta \mathcal{L} / \delta p(x) = 0$ gives
\begin{equation}
R(x) - \frac{1}{\beta}\left(\log \frac{p(x)}{\pzero(x \mid q)} + 1\right) - \mu = 0.
\end{equation}
Solving for $\log p(x)$ yields
\begin{equation}
\log p(x) = \beta R(x) + \log \pzero(x \mid q) - (1 + \mu\beta).
\end{equation}
Exponentiating and absorbing the $x$-independent term $e^{-(1 + \mu\beta)}$ into the partition function $Z(q)$ gives the target distribution $p^*(x \mid q)$ in~\eqref{eq:pstar}.

\section{MH construction yields a $p_t$-invariant kernel}
\label{app:MH}
Condition on the stage $t$ and context $\calC_t$, and write
$Q_t(\cdot\mid x,\calC_t)$ for the proposal kernel.
\begin{theorem}[MH construction yields a $p_t$-invariant kernel]
\label{thm:mh_invariance}
The MH kernel $M_t$ defined by~\eqref{eq:mh_ratio} is $p_t$-invariant, i.e., satisfies~\eqref{eq:invariance}.
\end{theorem}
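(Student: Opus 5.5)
The plan is to prove $p_t$-invariance by first establishing detailed balance (reversibility) of $M_t$ with respect to $p_t$ and then summing over the starting state. We take the MH kernel in~\eqref{eq:mh_ratio} in its standard form on the countable space $\calX$. For $y\neq x$, set $M_t(x,y)=Q_t(y\mid x,\calC_t)\,\alpha_t(x,y)$ with the full ratio
\[
\alpha_t(x,y)=\min\Bigl\{1,\tfrac{p_t(y)\,Q_t(x\mid y,\calC_t)}{p_t(x)\,Q_t(y\mid x,\calC_t)}\Bigr\}.
\]
The diagonal holds the rejection mass, $M_t(x,x)=1-\sum_{y\neq x}M_t(x,y)$. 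Since $p_t=\gamma_t/Z_t$, the ratio involves only $\gamma_t$, so $Z_t$ is never needed. This is harmless for the argument but worth remarking.

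\textbf{Step 1 (detailed balance off the diagonal).} For $x\neq y$ with $p_t(x)Q_t(y\mid x)>0$ and $p_t(y)Q_t(x\mid y)>0$, use the identity $a\min\{1,b/a\}=\min\{a,b\}$ to get
\[
p_t(x)M_t(x,y)=\min\{p_t(x)Q_t(y\mid x),\,p_t(y)Q_t(x\mid y)\},
\]
which is symmetric in $(x,y)$. Hence $p_t(x)M_t(x,y)=p_t(y)M_t(y,x)$. The diagonal case $x=y$ holds trivially.

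\textbf{Step 2 (invariance).} Sum detailed balance over $x$:
\[
\sum_x p_t(x)M_t(x,y)=\sum_x p_t(y)M_t(y,x)=p_t(y)\sum_x M_t(y,x)=p_t(y),
\]
using that $M_t(y,\cdot)$ is a probability mass function. This is exactly~\eqref{eq:invariance}. The interchange needs no justification, because all terms are nonnegative and $\calX$ is countable.

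\textbf{Main obstacle: degenerate cases.} The substantive point is the convention when some quantity in the ratio vanishes. If $p_t(x)=0$, the $x$ term contributes nothing to the left side of invariance, so such starting states can be ignored. If $p_t(x)>0$ but $Q_t(x\mid y)=0$ or $p_t(y)=0$, the convention $\alpha_t=0$ (or the ratio read as $0$) gives $M_t(x,y)=0$. We then check that the reverse flux $p_t(y)M_t(y,x)$ is also $0$ in each subcase, so Step 1 still holds. This requires handling the cases where $Q_t(y\mid x)=0$ versus where it is positive.

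Two further points should be stated explicitly. First, the context $\calC_t$ is held fixed during the chain, as the conditioning in the statement indicates; otherwise $Q_t$ would not define a time-homogeneous kernel. Second, the argument covers the mixture of the four modes. Either the full ratio uses the mixture density $Q_t=\sum_j \pi_j Q^{(j)}$, or each component kernel $M^{(j)}$ is separately $p_t$-reversible. In the second case, any convex combination or composition of $p_t$-invariant kernels is again $p_t$-invariant, and so is the $K$-fold power $M_t^K$. This follows by iterating Step 2.
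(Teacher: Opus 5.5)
Your proposal is correct and follows the paper's proof. Both establish detailed balance off the diagonal and then sum over $x$ to obtain~\eqref{eq:invariance}. The paper verifies detailed balance through a WLOG case split on which of $p_t(x)Q_t(y\mid x,\calC_t)$ and $p_t(y)Q_t(x\mid y,\calC_t)$ is smaller; you use the symmetric identity $p_t(x)M_t(x,y)=\min\{p_t(x)Q_t(y\mid x),\,p_t(y)Q_t(x\mid y)\}$ instead. Your explicit handling of zero-probability cases, the diagonal rejection mass, and the fixed context is a modest gain in rigor that the paper's argument glosses over.
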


\begin{proof}
We show that $M_t$ satisfies the stronger property of \emph{detailed balance},
\begin{equation}
\label{eq:detailed_balance}
p_t(x)\, M_t(x, y) \;=\; p_t(y)\, M_t(y, x), \qquad \forall\, x, y \in \mathcal{X},
\end{equation}
which immediately implies $p_t$-invariance: integrating both sides of~\eqref{eq:detailed_balance} over $x$ gives
\[
\int p_t(x)\, M_t(x, y)\, \mathrm{d}x
\;=\; p_t(y) \int M_t(y, x)\, \mathrm{d}x
\;=\; p_t(y),
\]
which is~\eqref{eq:invariance}. Intuitively, \eqref{eq:detailed_balance} says that at equilibrium $p_t$, the probability flux from $x$ to $y$ equals the flux from $y$ to $x$.

It remains to verify~\eqref{eq:detailed_balance} for the MH kernel. For $x = y$, both sides are equal, so the identity holds trivially. For $x \neq y$, we need to show
\[
p_t(x)\, Q_t(y \mid x,\calC_t)\, \alpha_t(x, y) \;=\; p_t(y)\, Q_t(x \mid y,\calC_t)\, \alpha_t(y, x).
\]
Without loss of generality, assume $p_t(x)\, Q_t(y \mid x,\calC_t) \leq p_t(y)\, Q_t(x \mid y,\calC_t)$ (if the opposite inequality holds, swap the roles of $x$ and $y$ in the argument below). Under this assumption:
\begin{itemize}[leftmargin=1.5em,topsep=0pt]
\item The ratio in~\eqref{eq:mh_ratio} satisfies $\dfrac{p_t(y)\, Q_t(x \mid y,\calC_t)}{p_t(x)\, Q_t(y \mid x,\calC_t)} \geq 1$, so $\alpha_t(x, y) = 1$.
\item The reverse ratio is $\dfrac{p_t(x)\, Q_t(y \mid x,\calC_t)}{p_t(y)\, Q_t(x \mid y,\calC_t)} \leq 1$, so $\alpha_t(y, x) = \dfrac{p_t(x)\, Q_t(y \mid x,\calC_t)}{p_t(y)\, Q_t(x \mid y,\calC_t)}$.
\end{itemize}
Now compute both sides:
\begin{align*}
\text{Left side:} \quad & p_t(x)\, Q_t(y \mid x,\calC_t)\, \alpha_t(x, y) \;=\; p_t(x)\, Q_t(y \mid x,\calC_t) \cdot 1 \;=\; p_t(x)\, Q_t(y \mid x,\calC_t). \\
\text{Right side:} \quad & p_t(y)\, Q_t(x \mid y,\calC_t)\, \alpha_t(y, x) \;=\; p_t(y)\, Q_t(x \mid y,\calC_t) \cdot \frac{p_t(x)\, Q_t(y \mid x,\calC_t)}{p_t(y)\, Q_t(x \mid y,\calC_t)} \;=\; p_t(x)\, Q_t(y \mid x,\calC_t).
\end{align*}
Both sides agree, so detailed balance holds, and hence $p_t$-invariance. \qedhere
\end{proof}

\section{Weight Simplification under Time-Reversal Backward Kernel}
\label{app:weights}

Starting from the general SMC weight:
\begin{equation}
w_t(x_{t-1}, x_t) = \frac{\gamma_t(x_t)\, L_{t-1}(x_t, x_{t-1})}{\gamma_{t-1}(x_{t-1})\, M_t^K(x_{t-1}, x_t)}.
\end{equation}
Substituting the time-reversal backward kernel $L_{t-1}(x_t, x_{t-1}) = p_t(x_{t-1})\, M_t^K(x_{t-1}, x_t) / p_t(x_t)$:
\begin{align}
w_t(x_{t-1}, x_t) &= \frac{\gamma_t(x_t)}{\gamma_{t-1}(x_{t-1})} \cdot \frac{p_t(x_{t-1})\, M_t^K(x_{t-1}, x_t)}{p_t(x_t)\, M_t^K(x_{t-1}, x_t)} \\
&= \frac{\gamma_t(x_t)}{\gamma_{t-1}(x_{t-1})} \cdot \frac{\gamma_t(x_{t-1}) / Z_t}{\gamma_t(x_t) / Z_t} \\
&= \frac{\gamma_t(x_{t-1})}{\gamma_{t-1}(x_{t-1})} = e^{\Delta\beta_t \cdot R(x_{t-1})}.
\end{align}
Both $M_t^K$ and $L_{t-1}$ cancel completely, confirming~\eqref{eq:weight}.

\section{Realizing the Forward Kernel with LLM Proposals}
\label{app:kernel}

Algorithm~\ref{alg:smc} requires a $p_t$-invariant MCMC kernel $M_t$ at each step. {Here $Q_t$ is the LLM proposal distribution, while one application of $M_t$ is the proposal plus MH accept/reject transition; Algorithm~\ref{alg:smc} applies this transition $K$ times per particle.} We now detail how this kernel is realized when the only available sampler is a black-box LLM.

\paragraph{LLM as proposal distribution.}
At step $t$, given a parent program $x$ selected by resampling, the LLM proposal kernel generates a candidate program $x'$ by sampling from
\begin{equation}\label{eq:proposal}
x' \sim Q_t(\cdot \mid x,\calC_t),
\end{equation}
where $\calC_t$ denotes the \emph{context} at step $t$: a structured collection of information provided to the LLM alongside the parent program $x$ and task $q$. This context may include the history of previously evaluated programs with their rewards, high-performing programs from other particles (inspirations), and instructions specifying the edit granularity. The notation $Q_t(\cdot \mid x,\calC_t)$ emphasizes that the proposal is implemented by the \emph{same LLM} that defines the prior $p_0(\cdot\mid q)$, but with additional conditioning on the parent and context. When $\calC_t = \varnothing$ and no parent is provided, the proposal reduces to the prior $p_0(\cdot \mid q)$.

\paragraph{The invariance requirement.}
To ensure that $M_t$ leaves $p_t$ invariant, the standard approach is Metropolis--Hastings (MH): accept $x'$ with probability
\begin{equation}\label{eq:mh_ratio}
\alpha_t(x, x') = \min\!\bigg\{1,\; \frac{p_t(x')\, Q_t(x \mid x',\calC_t)}{p_t(x)\, Q_t(x' \mid x,\calC_t)}\bigg\}.
\end{equation}
Expanding $p_t(x) \propto p_0(x \mid q)\, e^{\beta_t R(x)}$, the acceptance ratio becomes
\begin{equation}\label{eq:mh_expanded}
\alpha_t(x, x') = \min\!\bigg\{1,\; e^{\beta_t(R(x') - R(x))} \cdot \underbrace{\frac{p_0(x' \mid q)}{p_0(x \mid q)} \cdot \frac{Q_t(x \mid x',\calC_t)}{Q_t(x' \mid x,\calC_t)}}_{\displaystyle \sigma_t(x, x')}\bigg\}.
\end{equation}
The composite ratio $\sigma_t(x, x')$ captures the total structural asymmetry between forward and reverse transitions. Computing it requires evaluating both the prior density ratio and the proposal density ratio, both of which are intractable for black-box LLMs.

\paragraph{From the full MH ratio to a tractable approximation.}
Computing $\sigma_t(x,x')$ requires evaluating the LLM prior ratio and the
proposal density ratio, both of which are unavailable through black-box
LLM APIs. To obtain a tractable acceptance rule, we adopt the reward-only
criterion in~\Cref{eq:mh_simple} of the main text, which corresponds to
substituting $\sigma_t(x,x') = 1$ in~\eqref{eq:mh_expanded}. While this approximation may cause the mutation kernel to deviate from exact $p_t$-invariance, it is motivated by the heuristic that the LLM tends
to perform local modifications---changing a few lines, tuning a
hyperparameter, refactoring a subroutine. For such edits, both the prior ratio
$p_0(x' \mid q)/p_0(x \mid q)$ and the proposal ratio
$Q_t(x \mid x',\calC_t)/Q_t(x' \mid x,\calC_t)$ are close to one,
and even when each factor deviates from unity their product can stay
close to one if the deviations partially cancel. The high MH acceptance rates observed in
\Cref{fig:kernel} provide indirect empirical evidence that the
substitution is a reasonable approximation in practice.

\section{Proof of \Cref{thm:complexity}}
\label{app:imported_complexity}

The proof follows the proof sketch in \Cref{sec:convergence}: we state the
imported finite-sample SMC bound of \citep[Theorem~1 \&~Sec.~5.1]{marion2023finite}
in its mixing-time form (which does not require kernel reversibility) and
then derive the three substitutions (S1), (S2), (S3) that map its abstract
quantities to the \textsc{SMCEvolve} variables.

\paragraph{Warm-start mixing time.}
For a Markov kernel $M$ with invariant distribution $\mu$, an
$\omega$-warm starting set
$P_\omega(\mu) \triangleq \{\nu : \sup_{B}\, \nu(B)/\mu(B) \le \omega\}$,
and an accuracy $\eta \in (0,1)$, the warm-start mixing time of $M$ is
\begin{equation}
\label{eq:mixing_time_def}
\tau_M(\eta, \omega) \;\triangleq\; \min\!\left\{
K \ge 1 :\; \sup_{\nu \in P_\omega(\mu)} \|\nu M^K - \mu\|_{\mathrm{TV}}
\;\le\; \eta
\right\}.
\end{equation}
This is Definition (Sec.~2.1) of \citep{marion2023finite}.

\paragraph{Imported bound \citep[Theorem~1 \&~Sec.~5.1]{marion2023finite}.}
Consider an SMC sampler over a geometric-mixture path
\(\mu_0,\ldots,\mu_T=\pi\) with $T = \lceil \log\Gamma \rceil$ and
$\beta_t = t/T$, initialized with iid samples from \(\mu_0\) and applying
\(K\) transitions of a \(\mu_t\)-invariant Markov kernel \(M_t\) at each
stage \(t\). Suppose
\begin{enumerate}[leftmargin=1.5em,topsep=0pt]
\item the bridge weights satisfy $\sup_x w_t(x) \le W$ and
$z_{t-1}/z_t \le Z$ for every $t$, with $\gamma \triangleq W \cdot Z$
(this is AS1 of \citealp{marion2023finite});
\item there exists $\tau < \infty$ such that
$\tau_{M_t}(\eta_\star, \omega_\star) \le \tau$ for every $t$, at the
internal precision/warmness level chosen by the analysis (Theorem~1 of
\citealp{marion2023finite} uses $\omega_\star = 2$ and
$\eta_\star = 1/(8NT)$);
\item $\Gamma \triangleq \sup_x d\pi/d\mu_0(x) < \infty$.
\end{enumerate}
Then for every \(\|f\|_\infty\le 1\) and \(\epsilon>0\), there exists a
choice of \((N,T,K)\) such that the SMC empirical measure $\eta_T^N$
satisfies \(\mathbb P(|\eta_T^N(f)-\pi(f)|\le\epsilon)\ge 3/4\), and the
total kernel-evaluation budget is bounded by
\begin{equation}
\label{eq:imported}
\calB \;=\; NTK \;=\; \mathcal{O}^\ast\!\left(
(\epsilon^{-2}\vee \gamma^2) \cdot \tau \cdot
\log\Gamma\,\log^2\!\log\Gamma
\right).
\end{equation}
The reversible-kernel form \citep[Cor.~6.1]{marion2023finite} is
recovered by the spectral-gap-to-mixing-time bound
$\tau_{M_t}(\eta, \omega) \le (1/\rho^{\mathrm{gap}})\big[\log(2/\eta) + \log(\omega - 1)\big]$,
which gives $\tau = \mathcal{O}(1/\rho^{\mathrm{gap}})$ up to log factors
absorbed into $\mathcal{O}^\ast$.

\paragraph{Setup for \textsc{SMCEvolve}.}
The bridges $p_t(x)\propto p_0(x\mid q) e^{\beta_t R(x)}$ with
$0=\beta_0<\cdots<\beta_T=\beta$ form a geometric-mixture path from
$p_0(\cdot\mid q)$ to $p^*(\cdot\mid q)$. The initialization condition holds
because particles are iid from $p_0(\cdot\mid q)$
(\Cref{alg:smc}, line 2), and the kernel correctness/mixing condition holds
under the kernel hypothesis of \Cref{thm:complexity}. It remains to identify $(\gamma^2, \tau, \log\Gamma)$ in
\eqref{eq:imported} with \textsc{SMCEvolve} quantities. We do so in
three steps.

\subsection*{(S1) Bridge regularity $\gamma^2 \le \kappa^{-1}$}

The imported bound~\eqref{eq:imported} requires a deterministic upper bound
on the adjacent $L^2$-bridge ratio, which we adopt as part of the
hypothesis of \Cref{thm:complexity}:
\begin{equation}
\left\|\frac{p_t}{p_{t-1}}\right\|^2_{L^2(p_{t-1})} \;\le\; \kappa^{-1},
\qquad t = 1,\dots,T.
\label{eq:S1-app}
\end{equation}
The parameterization in terms of $\kappa$ is justified by two
considerations.

\paragraph{Worst-case schedule-dependent bound.}
The geometric reward-tilted bridge~\eqref{eq:bridging} satisfies
\begin{equation}
\left\|\frac{p_t}{p_{t-1}}\right\|_\infty
\le \frac{e^{\Delta\beta_t R_+}}{p_{t-1}(e^{\Delta\beta_t R})}
\;\le\; e^{\Delta\beta_t(R_+ - R_-)}
\;=\; e^{\Delta\beta_t \Delta_R},
\label{eq:bridge-linfty}
\end{equation}
where the second inequality uses
$p_{t-1}(e^{\Delta\beta_t R}) \ge e^{\Delta\beta_t R_-}$. Since the $L^2$
norm is dominated by the $L^\infty$ norm, this gives the deterministic
certificate
$\kappa^{-1} \le e^{2\Delta\beta_{\max} \Delta_R}$, with
$\Delta\beta_{\max} \triangleq \max_t \Delta\beta_t$. Short bridges
(small $\Delta\beta_t$) imply small bridge ratios.

\paragraph{ESS bisection as a finite-sample mechanism that targets~\eqref{eq:S1-app}.}
The empirical ESS
\begin{equation}
\widehat{\mathrm{ESS}}_t = \frac{\big(\sum_{n=1}^N w_t^{(n)}\big)^2}{\sum_{n=1}^N (w_t^{(n)})^2},
\qquad
w_t^{(n)} = \frac{p_t(x_{t-1}^{(n)})}{p_{t-1}(x_{t-1}^{(n)})}
\label{eq:ess-def-app}
\end{equation}
is the finite-sample analogue of $\|p_t/p_{t-1}\|^{-2}_{L^2(p_{t-1})}$:
when $\{x_{t-1}^{(n)}\}_{n=1}^N$ is an iid sample from $p_{t-1}$, a law of
large numbers gives
$\widehat{\mathrm{ESS}}_t / N \to \|p_t/p_{t-1}\|^{-2}_{L^2(p_{t-1})}$ as
$N\to\infty$. The ESS bisection rule~\eqref{eq:ess} is the algorithmic
mechanism that targets~\eqref{eq:S1-app} by enforcing its empirical
analogue $\widehat{\mathrm{ESS}}_t / N \ge \kappa$. We do not turn this
targeting into a formal derivation, since the SMC particles at stage
$t-1$ are not iid from $p_{t-1}$ and the schedule
$\{\lambda_t\}_{t=0}^T$ is data-dependent;~\eqref{eq:S1-app} is therefore
treated as an explicit assumption (with worst-case certificate
$e^{2\Delta\beta_{\max} \Delta_R}$) rather than a consequence of the ESS
rule.

\subsection*{(S2) Mixing-time bound $\tau \le 1/(1-\rho)$ via uniform ergodicity}

The kernel hypothesis of \Cref{thm:complexity} provides uniform ergodicity
$\sup_x \|M_t^K(x,\cdot) - p_t\|_{\mathrm{TV}} \le C \rho^K$ with rate
$\rho \in (0,1)$. We show that this directly yields a warm-start
mixing-time bound that feeds into~\eqref{eq:imported} without any
reversibility hypothesis. The argument is in four steps.

\paragraph{(i) TV convexity in the first argument.}
For any starting distribution $\nu$ and any measurable $A$, since $\nu$
is a probability measure,
\[
\nu M_t^K(A) - p_t(A)
\;=\; \int [M_t^K(x, A) - p_t(A)]\, \nu(dx).
\]
Taking absolute values, then $\sup_A$, and exchanging
$\sup_A \int \le \int \sup_A$ gives
\[
\|\nu M_t^K - p_t\|_{\mathrm{TV}}
\;\le\; \int \|\delta_x M_t^K - p_t\|_{\mathrm{TV}}\, \nu(dx)
\;\le\; \sup_x \|\delta_x M_t^K - p_t\|_{\mathrm{TV}}
\;\le\; C \rho^K.
\]
In particular this holds for every $\omega$-warm
$\nu \in P_\omega(p_t)$.

\paragraph{(ii) Inverting.}
Setting $C \rho^K \le \eta$ yields $K \ge \log(C/\eta) / \log(1/\rho)$,
so the warm-start mixing time of $M_t$ is bounded by
\begin{equation}
\tau_{M_t}(\eta, \omega)
\;\le\; \frac{\log(C/\eta)}{\log(1/\rho)},
\qquad \forall\, \omega \ge 1, \; \eta \in (0,1).
\label{eq:tv-to-tau}
\end{equation}

\paragraph{(iii) Elementary inequality.}
For $\rho \in (0,1)$, $\log(1/\rho) \ge 1 - \rho$: the function
$f(\rho) = -\log\rho - (1 - \rho)$ has $f(1) = 0$ and
$f'(\rho) = -1/\rho + 1 < 0$ on $(0,1)$, so $f \ge 0$ on $(0,1)$.
Combining with~\eqref{eq:tv-to-tau} gives
\begin{equation}
\tau_{M_t}(\eta, \omega) \;\le\; \frac{\log(C/\eta)}{1 - \rho}.
\label{eq:S2-app}
\end{equation}

\paragraph{(iv) Substitution into~\eqref{eq:imported}.}
Theorem~1 of \citep{marion2023finite} sets
$(\eta_\star, \omega_\star) = (1/(8NT), 2)$, so the mixing-time factor
in~\eqref{eq:imported} satisfies
\[
\tau \;=\; \max_t \tau_{M_t}(1/(8NT), 2)
\;\le\; \frac{\log(8NTC)}{1 - \rho}
\;=\; \frac{\mathcal{O}(\log(NT))}{1 - \rho}.
\]
The $\mathcal{O}(\log(NT))$ factor is polylogarithmic and is absorbed
into $\widetilde{\mathcal{O}}$ in the SMCEvolve rate~\eqref{eq:main-bound}.

\subsection*{(S3) $\log\Gamma \le \beta\Delta_R$ via reward oscillation}

\begin{lemma}[Path length for reward tilting]
\label{lem:path_length_reward}
Under the kernel hypothesis of \Cref{thm:complexity}, suppose $R$ is bounded
on the support of $p_0(\cdot\mid q)$ with oscillation $\Delta_R$. Then
\(\log\Gamma\le \beta\Delta_R\).
\end{lemma}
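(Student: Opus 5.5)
The plan is to compute the Radon--Nikodym derivative $d\pi/d\mu_0$ explicitly on the countable space $\calX$ and bound it pointwise. Here $\pi = p^*(\cdot\mid q)$ and $\mu_0 = p_0(\cdot\mid q)$.

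First, I will check that $\Gamma$ is well defined. By \Cref{def:ideal} and the remark after it, $Z(q) = \int p_0(x'\mid q)\,e^{\beta R(x')}\,dx'$ lies in $[e^{\beta R_-}, e^{\beta R_+}]$. The lower bound holds because $R \ge R_-$ on $\mathrm{supp}(p_0)$ and $p_0$ sums to one, and the upper bound follows in the same way from $R\le R_+$. In particular $0<Z(q)<\infty$, so $p^*$ is a genuine probability mass function with $p^* \ll p_0$. Hence $\Gamma = \sup_{x:\,p_0(x\mid q)>0} p^*(x\mid q)/p_0(x\mid q)$. Points outside the support of $p_0$ carry no $p^*$-mass and do not enter the supremum.

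Second, on $\mathrm{supp}(p_0)$ the ratio is $p^*(x\mid q)/p_0(x\mid q) = e^{\beta R(x)}/Z(q)$, since the $p_0$ factors cancel exactly. I will bound the numerator by $e^{\beta R_+}$, which uses $\beta>0$ and $R(x)\le R_+$. I will bound the denominator below by $Z(q)\ge e^{\beta R_-}$ from the first step. Together these give $\Gamma \le e^{\beta(R_+-R_-)} = e^{\beta\Delta_R}$, and taking logarithms gives $\log\Gamma\le\beta\Delta_R$. Since $\Gamma\ge 1$ (a ratio of two probability mass functions cannot be below one everywhere on the support), $\log\Gamma\ge 0$ and the bound is meaningful.

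The argument needs only bounded oscillation of $R$; the kernel hypothesis of \Cref{thm:complexity} is not used. The only point requiring care is the lower bound on $Z(q)$. It relies on $R\ge R_-$ holding everywhere $p_0$ puts mass, which is exactly how $R_-$ is defined as an infimum over $\mathrm{supp}(p_0)$. It also relies on the sum defining $Z(q)$ converging, which the upper bound $e^{\beta R_+}<\infty$ guarantees. If $\Delta_R=0$, then $R$ is constant on the support, $p^*=p_0$, $\Gamma=1$, and the bound holds with equality. So I expect no real obstacle beyond making these support conventions explicit.
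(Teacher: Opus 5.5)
Your proof is correct and is the same argument as the paper's. You cancel the $p_0$ factors to get $\Gamma=\sup_{x:p_0(x\mid q)>0} e^{\beta R(x)}/Z(q)$, bound the numerator by $e^{\beta R_+}$, and bound the denominator below by $Z(q)=\E_{p_0}[e^{\beta R}]\ge e^{\beta R_-}$. Your side remarks are accurate but not needed for the bound: the well-definedness of $Z(q)$, $\Gamma\ge 1$, the fact that the kernel hypothesis is never used, and the equality case $\Delta_R=0$.
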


\begin{proof}
Since $p^*(x\mid q) = Z(q)^{-1} p_0(x\mid q) e^{\beta R(x)}$,
\[
\Gamma
= \sup_{x:p_0(x\mid q)>0} \frac{p^*(x\mid q)}{p_0(x\mid q)}
= \sup_{x:p_0(x\mid q)>0} \frac{e^{\beta R(x)}}{Z(q)}
\le \frac{e^{\beta R_+}}{Z(q)}.
\]
Moreover,
\[
Z(q) = \E_{p_0}[e^{\beta R}] \;\ge\; e^{\beta R_-}
\]
because $R(x) \ge R_-$ on the support of $p_0$. Hence
$\Gamma \le e^{\beta(R_+ - R_-)} = e^{\beta\Delta_R}$, which gives
$\log\Gamma \le \beta\Delta_R$.
\end{proof}

\paragraph{Combining (S1)--(S3).}
Substituting \eqref{eq:S1-app}, \eqref{eq:S2-app}, and
\Cref{lem:path_length_reward} into the imported bound \eqref{eq:imported}:
\[
\calB \;=\; \mathcal{O}^\ast\!\left(
\frac{\epsilon^{-2}\vee \kappa^{-1}}{1-\rho}\,
\log\Gamma\,\log^2\!\log\Gamma
\right)
\;\subseteq\;
\widetilde{\mathcal{O}}\!\left(
\frac{\epsilon^{-2}\vee \kappa^{-1}}{1-\rho}\,
\beta \Delta_R
\right),
\]
where the iterated-logarithm factors are polylogarithmic in
$\beta\Delta_R$ and absorbed into $\widetilde{\mathcal{O}}$. The
concentration statement
$\mathbb P(|\eta_T^N(f)-p^*(f)|\le \epsilon) \ge 3/4$ is inherited
directly from the imported bound. This is exactly
\eqref{eq:concentration}--\eqref{eq:main-bound} in \Cref{thm:complexity}.
\qed

\subsection{Per-factor requirements and degenerate regimes}
\label{app:regime}

\Cref{thm:complexity} bounds the joint LLM-call budget $\calB = NTK$,
but each factor must individually be large enough for the SMC sampler
to function. Concretely, the theorem requires the particle count to
satisfy $N \gtrsim \kappa^{-1} \vee \epsilon^{-2}$, so the ESS
constraint controls the bridge-ratio variance and supports the Monte
Carlo error $\epsilon$, and the mutation count to satisfy
$K \gtrsim \log(1/\epsilon)/(1-\rho)$, so the uniform-ergodicity TV
error $C\rho^K$ falls below $\epsilon$ at every bridge; the annealing
length $T = \widetilde{\mathcal{O}}(\beta\Delta_R)$ is set adaptively
by the ESS bisection rule and is not a free hyperparameter. The
extreme cases $N = 1$ (no resampling diversity) and $K = 1$ (a single
MH step per particle, insufficient mixing whenever $\rho$ is close
to~$1$) sit outside this regime, and the formal guarantee no longer
applies.

\section{Implementation Details}
\label{app:implementation}

\subsection{Island-Based Parallelism}
\label{sec:islands}

\textsc{SMCEvolve} runs $I$ independent SMC chains (\emph{islands}), each with $N$ particles and its own temperature schedule. Every $\tau$ epochs, islands exchange top-performing particles via a migration step: each island sends its best $m$ particles to a randomly assigned neighbor and retains the top $N$ after merging. This maintains diversity across subpopulations while enabling parallel execution.

\subsection{Resampling and Numerical Stability}
\label{app:resampling}

We use \emph{systematic resampling}~\citep{carpenter1999improved}, which
requires only a single uniform random number and is computationally convenient
in practice; see \citep{douc2005comparison} for a comparison with the
stratified variant of \citep{kitagawa1996monte} and a variance analysis. For numerical stability, the log-weights
$\log w_t^{(n)} = \Delta\beta_t \cdot R(x_{t-1}^{(n)})$ are shifted by their
maximum before exponentiation (the log-sum-exp trick), preventing overflow
without affecting the normalized weights~\eqref{eq:softmax}.

\subsection{Summary of Theoretical Guarantees}
\label{app:guarantees}

\Cref{tab:guarantees} summarizes how each component of \textsc{SMCEvolve} maps to the SMC framework and which guarantees are preserved.

\begin{table}[h]
\caption{Mapping between the SMC framework and the \textsc{SMCEvolve} implementation. \textbf{Exact}: the theoretical guarantee is preserved. \textbf{Approx.}: a practical relaxation is introduced.}
\label{tab:guarantees}
\centering
\small
\begin{tabular}{llc}
\toprule
\textbf{Component} & \textbf{Implementation} & \textbf{Status} \\
\midrule
Target distribution $p^* \propto p_0\, e^{\beta R}$ & Same & Exact \\
Annealing path $p_t \propto p_0\, e^{\beta_t R}$ & Same & Exact \\
Importance weights $w_t = e^{\Delta\beta_t R(x_{t-1})}$ & Log-sum-exp stabilized & Exact \\
Temperature schedule & ESS bisection & Adaptive extension \\
Resampling & Systematic & Exact \\
Forward kernel $M_t$ & {MH kernel with LLM proposal mixture} & Approx. \\
Kernel selection & Thompson sampling mixture & Approx. (via $\widehat M_t^{H_t}$) \\
Parallelism & Islands + migration & Approx. \\
\bottomrule
\end{tabular}
\end{table}

The target distribution, annealing path, and importance weights remain exact.
Mutation heuristics, adaptive kernel selection, adaptive temperature schedules,
and island-based parallel extensions enter the theory only through additional
approximation terms or extra assumptions beyond the main theorem section.

\section{Task Details}
\label{app:task_details}

This appendix collects natural-language descriptions of every benchmark
task evaluated in Tables~\ref{tab:math}--\ref{tab:algotune} and
Figure~\ref{fig:tinystories-curve}. Formal interfaces (input/output
signatures, evaluators) are documented in the per-task \texttt{task.md}
files released with our code.

\subsection{Math domain (Table~\ref{tab:math})}
\label{app:tasks_math}

Ten open problems from analytic combinatorics and discrete geometry,
ported from the public release accompanying DeepMind's AlphaEvolve. Each
task asks the program to \emph{construct} a concrete mathematical object
(a packing, a point set, or a step function) that pushes a known numerical
constant in the direction predicted by theory; a score of $1.0$ ties the
2024 AlphaEvolve benchmark, and scores above $1.0$ strictly improve on it.

\begin{table}[h]
\centering
\small
\caption{Math domain --- AlphaEvolve combinatorial geometry tasks.}
\label{tab:tasks_math}
\setlength{\extrarowheight}{6pt}
\begin{tabularx}{\linewidth}{@{}>{\raggedright\arraybackslash}p{0.27\linewidth}X@{}}
\toprule
\textbf{Task} & \textbf{Description} \\
\midrule
Circle Packing in Rect.\ ($N{=}21$)
  & Place 21 disjoint circles in an axis-aligned rectangle of perimeter
    $\le 4$ (width $+$ height $\le 2$); maximize the sum of radii.
    AlphaEvolve App.\ B.13. \\
Hexagon Packing ($N{=}11$)
  & Pack 11 disjoint unit regular hexagons inside a larger regular hexagon;
    minimize the outer side length (equivalently, maximize its reciprocal).
    AlphaEvolve App.\ B.7. \\
Heilbronn Convex ($N{=}13$)
  & Inside a convex region of unit area, place 13 points so that the
    smallest triangle spanned by any three is as large as possible.
    AlphaEvolve App.\ B.10. \\
Heilbronn Triangle ($N{=}11$)
  & Same min-triangle-area objective restricted to the unit equilateral
    triangle with vertices $(0,0),(1,0),(\tfrac12,\tfrac{\sqrt3}{2})$.
    AlphaEvolve App.\ B.9. \\
Min--Max--Min Dist.\ ($n{=}16,d{=}2$)
  & Place 16 points in the plane; minimize the ratio of the largest to
    smallest pairwise Euclidean distance (a ``close-to-uniform''
    configuration). AlphaEvolve App.\ B.8. \\
Kissing Number ($d{=}11$)
  & Construct an integer lattice cloud in $\mathbb{R}^{11}$ whose minimum
    pairwise squared distance is at least the maximum squared norm of any
    point; maximize cardinality (a lower bound on the 11D kissing number).
    AlphaEvolve App.\ B.11. \\
1st Autocorrelation Inequality
  & Construct a non-negative step function $f:\mathbb{R}\to\mathbb{R}_{\ge0}$
    whose self-convolution improves the upper bound on the constant $C_1$.
    AlphaEvolve App.\ B.1. \\
2nd Autocorrelation Inequality
  & Construct a non-negative step function $f$ that pushes the lower bound
    on the related constant $C_2$ (defined via the autoconvolution norm)
    upward. AlphaEvolve App.\ B.2. \\
3rd Autocorrelation Inequality
  & Construct a (possibly signed) step function $f$ that improves the
    upper bound on $C_3$, governing $|f \star f|$.
    AlphaEvolve App.\ B.3. \\
Erd\H{o}s Minimum Overlap
  & Construct a non-negative step function $h$ satisfying the Minimum
    Overlap regularity constraints, lowering the best known upper bound on
    its asymptotic constant. AlphaEvolve App.\ B.5. \\
\bottomrule
\end{tabularx}
\end{table}

\subsection{Symbolic Regression domain (Table~\ref{tab:sr})}
\label{app:tasks_sr}

Sixteen scientific-equation discovery tasks (4 splits $\times$ 4 tasks)
drawn from LLM-SRBench~\citep{shojaee2025llm}. Each task hides a
ground-truth analytical expression behind a small numeric dataset; the
program must propose a closed-form parametric expression (with up to ten
free coefficients) whose mean-squared error against the held-out training
data is as small as possible. The reward $-\log_{10}(\mathrm{MSE})$
rewards exponentially-better fits, so a one-decade improvement in MSE
counts as $+1.0$ in score.

\begin{table}[h]
\centering
\small
\caption{Symbolic Regression domain --- four LLM-SRBench splits, four
tasks per split.}
\label{tab:tasks_sr}
\setlength{\extrarowheight}{6pt}
\begin{tabularx}{\linewidth}{@{}>{\raggedright\arraybackslash}p{0.22\linewidth}X@{}}
\toprule
\textbf{Split} & \textbf{Description} \\
\midrule
\texttt{bio\_pop\_growth}
  & Population-growth ODEs for $P(t)$ (logistic, Gompertz, Allee-effect,
    and similar). Given samples $(t,P)$, the program must express the
    instantaneous growth rate $\mathrm{d}P/\mathrm{d}t$ in closed form.
    Tasks BPG0--BPG3 cover four ODEs of varying complexity. \\
\texttt{chem\_react}
  & Chemical reaction-rate ODEs for the concentration $A(t)$ of a species
    (first-order decay, autocatalytic, Michaelis--Menten-like, etc.). Given
    $(t,A)$ samples, the program must recover $\mathrm{d}A/\mathrm{d}t$.
    Tasks CRK0--CRK3 cover four mechanisms. \\
\texttt{matsci}
  & Materials-science constitutive laws $\sigma(\varepsilon,T)$ from
    continuum mechanics (linear elasticity with thermal expansion,
    power-law hardening, Arrhenius softening, etc.). Tasks MatSci0--MatSci3
    cover four laws. \\
\texttt{phys\_osc}
  & Second-order oscillator ODEs for the velocity $v(t)$ of a particle
    (harmonic, damped, driven, anharmonic). The program expresses the
    acceleration $\mathrm{d}v/\mathrm{d}t$ from a task-dependent subset of
    $(x,t,v)$: PO0/PO2 use all three; PO1 uses $(x,t)$; PO3 uses $(t,v)$. \\
\bottomrule
\end{tabularx}
\end{table}

\subsection{AlgoTune domain (Table~\ref{tab:algotune})}
\label{app:tasks_algotune}

Eight numerical, signal-processing, and linear-algebra tasks from the
public AlgoTune benchmark~\citep{press2025}. Each task ships with a known
\emph{correct} reference implementation (typically a thin wrapper around
SciPy or LAPACK); the program's job is to produce a drop-in replacement
that is measurably faster on the benchmark's input distribution while
still passing AlgoTune's correctness verifier. The reward is the
wall-clock speedup over the reference (so $1.0$ $=$ no speedup, $10.0$
$=$ ten-times faster); any incorrect output scores $0$.

\begin{table}[h]
\centering
\small
\caption{AlgoTune domain --- runtime-optimization tasks.}
\label{tab:tasks_algotune}
\setlength{\extrarowheight}{6pt}
\begin{tabularx}{\linewidth}{@{}>{\raggedright\arraybackslash}p{0.30\linewidth}X@{}}
\toprule
\textbf{Task} & \textbf{Description} \\
\midrule
\texttt{affine\_transform\_2d}
  & Apply a 2D affine warp (encoded as a $2{\times}3$ matrix) to an
    $n{\times}n$ image with cubic-spline interpolation and zero padding.
    Reference: \texttt{scipy.ndimage.affine\_transform}. \\
\texttt{convolve2d\_full\_fill}
  & Compute the full 2D convolution of two real matrices of size
    $(30n,30n)$ and $(8n,8n)$ with zero-padded boundaries --- the dense
    direct case where blocking, FFT, or specialized kernels yield large
    speedups. \\
\texttt{eigenvectors\_complex}
  & Full eigendecomposition of a real square matrix that may have
    complex-conjugate pairs; return eigenvalues sorted by descending real
    then imaginary part with unit-norm eigenvectors. Reference:
    \texttt{numpy.linalg.eig}. \\
\texttt{fft\_cmplx\_scipy\_fftpack}
  & N-dimensional FFT of a complex $n{\times}n$ matrix matching the
    \texttt{scipy.fftpack} reference output. \\
\texttt{fft\_convolution}
  & Discrete convolution $z[n]=\sum_k x[k]\,y[n{-}k]$ of two 1D signals
    via FFT, supporting the standard \texttt{full}/\texttt{same}/%
    \texttt{valid} output modes. \\
\texttt{lu\_factorization}
  & LU factorization $A=PLU$ of a square matrix; return the permutation
    $P$, unit-lower-triangular $L$, and upper-triangular $U$. Reference:
    \texttt{scipy.linalg.lu}. \\
\texttt{polynomial\_real}
  & Given coefficients of a degree-$n$ polynomial with all real roots,
    return the roots in descending order, accurate to at least three
    decimal places. \\
\texttt{psd\_cone\_projection}
  & Project a symmetric matrix $A$ onto the cone of symmetric PSD matrices
    (zero out negative eigenvalues). The closed form is short, but a naive
    implementation leaves several-times speedup on the table. \\
\bottomrule
\end{tabularx}
\end{table}

\subsection{AutoResearch domain (Figure~\ref{fig:tinystories-curve})}
\label{app:tasks_autoresearch}

A single high-stakes systems task ported from
\texttt{karpathy/autoresearch}~\citep{karpathy_autoresearch_2026}. The
program under evolution is a \emph{complete} single-file GPT pretraining
script: model architecture, tokenizer setup, optimizer, learning-rate
schedule, batching, training loop, and final evaluation. Each candidate
trains for 60\,s of wall-clock time on a single RTX A5000 (24\,GB) on the
climbmix corpus, after which the script must report \texttt{val\_bpb}
(bits-per-byte on a held-out shard). Lower \texttt{val\_bpb} means a
better language model; the reward
$\max(0,\,2.0-\texttt{val\_bpb})$ is anchored so that a random baseline
($\texttt{val\_bpb}\approx 2.6$) scores $0$ and a well-trained small GPT
($\texttt{val\_bpb}\approx 1.0$) scores about $1.0$.

\begin{table}[h]
\centering
\small
\caption{AutoResearch domain --- single-GPU GPT pretraining task.}
\label{tab:tasks_autoresearch}
\setlength{\extrarowheight}{6pt}
\begin{tabularx}{\linewidth}{@{}>{\raggedright\arraybackslash}p{0.25\linewidth}X@{}}
\toprule
\textbf{Field} & \textbf{Specification} \\
\midrule
Goal
  & Evolve a single-file GPT pretraining script that minimizes
    \texttt{val\_bpb} on a held-out shard within a fixed compute budget. \\
Reward
  & $\max(0,\,2.0-\texttt{val\_bpb})$. Random baseline $\approx 0$;
    well-trained small GPT $\approx 1.0$. \\
Hardware / time
  & Single RTX A5000 (24\,GB), 60\,s wall-clock training per candidate
    ($\approx$90\,s end-to-end including evaluation). \\
Hard constraints
  & VRAM $\le$ 24\,GB (OOM scores $0$); single GPU (no distributed
    training); no extra Python packages may be installed. \\
Seed program
  & 6-layer GPT, embedding dim $384$, $3$ attention heads, full causal
    attention via PyTorch \texttt{scaled\_dot\_product\_attention};
    MuonAdamW optimizer (Muon for 2D weight matrices, AdamW elsewhere). \\
Search space
  & Free to modify model depth/width, optimizer, attention backend, batch
    size, learning-rate schedule, initialization, etc. The evaluation
    harness (tokenizer, dataloader, \texttt{evaluate\_bpb} from the fixed
    \texttt{prepare.py}) must be left untouched, and the last stdout line
    must be the literal \texttt{val\_bpb: <float>} produced by it. \\
\bottomrule
\end{tabularx}
\end{table}

\section{Existing Assets and Licenses}
\label{app:licenses}

We use existing assets only as benchmark and evaluation resources. The
math tasks are ported from the public AlphaEvolve benchmark release; the
symbolic-regression tasks are drawn from LLM-SRBench; the numerical
optimization tasks are drawn from AlgoTune; and the AutoResearch task uses
the TinyStories/climbmix pretraining setup from \texttt{karpathy/autoresearch}.
We credit the original sources in the main text and task descriptions. We
do not redistribute third-party datasets, pretrained models, or benchmark
code as new assets; our code release will include attribution, links, and
the applicable license or terms-of-use information for each external
resource. Commercial LLM APIs are accessed under the providers' terms, and
users reproducing the experiments must supply their own API credentials.

\section{Hyperparameters}
\label{app:hp}

\Cref{tab:hp} lists the hyperparameters that materially affect
\textsc{SMCEvolve}'s behaviour, together with the values used in our
experiments. The compute-budget group sets the LLM-call ceiling
$\le n_{\text{islands}}\!\times\!N\!\times\!K\!\times\!T_{\max}$; the
SMC-annealing group controls the adaptive temperature schedule; the
remaining groups govern mixing, mutation, and LLM behaviour.

\begin{table}[h]
\centering
\small
\caption{\textsc{SMCEvolve} hyperparameters.}
\label{tab:hp}
\renewcommand{\arraystretch}{1.15}
\begin{tabularx}{\linewidth}{@{}>{\ttfamily\raggedright\arraybackslash}l c X@{}}
\toprule
\normalfont\textbf{Parameter} & \normalfont\textbf{Value} & \normalfont\textbf{Meaning} \\
\midrule
\multicolumn{3}{@{}l}{\normalfont\textit{Compute budget}} \\
n\_islands               & $2$    & \normalfont Number of parallel islands. \\
particles\_per\_island   & $8$    & \normalfont Particles $N$ per island. \\
n\_proposals             & $2$    & \normalfont $K$ sequential LLM proposals per particle per step, used as a $K$-step MH chain. \\
min\_iterations          & $3$    & \normalfont Lower bound on SMC steps; also caps $\Delta\lambda\!\le\!1/\text{min\_iterations}$. \\
max\_iterations          & $15$   & \normalfont Hard upper bound $T_{\max}$ on SMC steps. \\
\addlinespace[2pt]\midrule
\multicolumn{3}{@{}l}{\normalfont\textit{SMC annealing}} \\
beta                     & $20$   & \normalfont Target inverse temperature $\beta$ (annealing endpoint). \\
kappa                    & $0.9$  & \normalfont ESS threshold used by bisection to pick $\lambda_t$. \\
\addlinespace[2pt]\midrule
\multicolumn{3}{@{}l}{\normalfont\textit{Island migration}} \\
migration\_interval      & $3$    & \normalfont SMC steps between successive migration events. \\
migration\_size          & $1$    & \normalfont Particles exchanged per migration event. \\
\addlinespace[2pt]\midrule
\multicolumn{3}{@{}l}{\normalfont\textit{Mutation kernel}} \\
kernel\_selection        & adaptive & \normalfont Thompson-sampling mixture over the $2{\times}2$ kernel grid (\texttt{diff}/\texttt{rewrite}~$\times$~with/without inspirations). \\
top\_k\_inspiration      & $2$    & \normalfont Highest-reward exemplars from the same island. \\
diverse\_inspirations    & $2$    & \normalfont MAP-Elites-style diversity exemplars (embedding-distance). \\
\addlinespace[2pt]\midrule
\multicolumn{3}{@{}l}{\normalfont\textit{LLM backend}} \\
temperature              & $1.0$  & \normalfont LLM sampling temperature. \\
max\_tokens              & $4096$ & \normalfont Maximum tokens per LLM response. \\
models                   & 50/50  & \normalfont Ensemble of \texttt{gpt-5-mini} and \texttt{gemini-3-flash} (\texttt{gpt-5.4}~+~\texttt{gemini-3-pro} for AutoResearch). \\
\bottomrule
\end{tabularx}
\end{table}

\section{Broader Impact}
\label{app:broader_impact}

\textsc{SMCEvolve} is intended as a research framework for LLM-driven
program search. Its main potential benefit is to make automated scientific
discovery and program optimization more sample-efficient and easier to
monitor through explicit resampling and stopping criteria. Our experiments
are conducted on benchmark tasks and do not involve human subjects or
sensitive data. As with any automated code-generation system, generated
candidates should be treated as suggestions and validated by task-specific
tests before use, especially outside controlled benchmarks. The method also
relies on access to external LLM APIs, which may affect cost and
reproducibility across users.

\clearpage

\section{Mutation Kernel Prompts}
\label{app:prompts}

\textsc{SMCEvolve} mutates particles through a $2{\times}2$ grid of LLM
prompt templates that varies along two axes:

\begin{table}[h]
\centering
\small
\renewcommand{\arraystretch}{1.15}
\begin{tabularx}{\linewidth}{@{}l X@{}}
\toprule
\textbf{Axis} & \textbf{Options} \\
\midrule
Edit granularity     & \texttt{diff} (local SEARCH/REPLACE patches) \quad vs.\quad \texttt{rewrite} (full-program rewrite) \\
Information source   & \texttt{no\_inspo} (parent only) \quad vs.\quad \texttt{with\_inspo} (parent $+$ top-$k$ and diversity exemplars from the same island) \\
\bottomrule
\end{tabularx}
\end{table}

\noindent
At each call, the system prompt fixes the kernel's role and the expected
output format; the user prompt injects the parent program (rendered with
the language tag \texttt{\{\{language\}\}} into \texttt{\{\{code\_content\}\}}),
its reward (\texttt{\{\{performance\_metrics\}\}}), and --- for
\texttt{with\_inspo} kernels --- a block of reference programs
(\texttt{\{\{inspiration\_section\}\}}). When no reference programs are
available (e.g.\ at initialization), \texttt{with\_inspo} kernels fall
back to their \texttt{no\_inspo} counterpart.

\subsection{\texttt{diff\_no\_inspo}: single-particle local kernel}
\label{app:prompts_diff_no}

\begin{systempromptbox}
You are an expert programmer tasked with making targeted improvements to an existing program.
Focus on small, precise edits that improve performance -- fix inefficiencies, tune parameters, optimize hot paths, or refine logic.

You MUST respond using an edit name, description, and the exact SEARCH/REPLACE diff format:

<NAME>
A shortened name summarizing the edit. Lowercase, no spaces, underscores allowed.
</NAME>

<DESCRIPTION>
Explain the targeted change you are making and why it should improve performance.
</DESCRIPTION>

<DIFF>
<<<<<<< SEARCH
# Original code to find and replace (must match exactly including indentation)
=======
# New replacement code
>>>>>>> REPLACE
</DIFF>

* Every SEARCH section must be copied verbatim from the current file, including indentation, whitespace, and comments -- matching is byte-for-byte.
* Every SEARCH section must match EXACTLY ONE location in the current file. If the snippet you want to change appears more than once, add enough surrounding lines of context to make the match unique; ambiguous edits are rejected.
* SEARCH must differ from REPLACE (no-op edits are rejected).
* You can propose multiple independent edits. SEARCH/REPLACE blocks follow one after another with no other text between them; they are applied in order, so later blocks see the result of earlier ones.
* Make sure the file still runs after your changes.
* IMPORTANT: Do not rewrite the entire program -- focus on targeted, surgical improvements.
\end{systempromptbox}

\begin{userpromptbox}
# Current Program

```{{language}}
{{code_content}}
```

Performance metrics:
{{performance_metrics}}

# Task

Suggest targeted edits to improve the program's performance.
Focus on the most impactful small changes -- parameter tuning, algorithmic micro-optimizations, or bug fixes.
Describe each change with a SEARCH/REPLACE block.
\end{userpromptbox}

\subsection{\texttt{diff\_with\_inspo}: interacting local kernel}
\label{app:prompts_diff_inspo}

\begin{systempromptbox}
You are an expert programmer tasked with making targeted improvements to an existing program.
You will be shown the current program AND one or more high-performing reference programs that solve the same problem.
Study the reference programs to identify specific techniques, parameter choices, or code patterns that could be surgically transplanted into the current program to improve it.

[Output format identical to diff_no_inspo: <NAME>, <DESCRIPTION>, and <DIFF> with SEARCH/REPLACE blocks.]

* IMPORTANT: Do not rewrite the entire program -- borrow and adapt specific ideas from the references via targeted edits.
\end{systempromptbox}

\begin{userpromptbox}
# Current Program

```{{language}}
{{code_content}}
```

Performance metrics:
{{performance_metrics}}

# Reference Programs

The following high-performing programs solve the same problem. Study their techniques and borrow specific ideas to improve the current program via targeted edits.

{{inspiration_section}}

# Task

Identify the most useful techniques from the reference programs and transplant them into the current program via small, targeted SEARCH/REPLACE edits.
Focus on adapting specific patterns -- do not rewrite the entire program.
\end{userpromptbox}

\subsection{\texttt{rewrite\_no\_inspo}: single-particle global kernel}
\label{app:prompts_rewrite_no}

\begin{systempromptbox}
You are an expert algorithm designer. Given a program and its performance, design a fundamentally improved or completely different algorithm to solve the same problem.
Do not just tweak the existing code -- rethink the approach from scratch. Consider different data structures, algorithmic paradigms, mathematical formulations, or optimization strategies.

You MUST respond using a short summary name, description, and the full code:

<NAME>
A shortened name summarizing the code you are proposing. Lowercase, no spaces, underscores allowed.
</NAME>

<DESCRIPTION>
Explain the new algorithmic approach you are taking, how it differs from the current one, and why it should perform better.
</DESCRIPTION>

<CODE>
```{{language}}
# The new program here.
```
</CODE>

* Your program must maintain the same inputs and outputs as the original.
* Think broadly -- consider entirely different algorithms, not just parameter changes.
* Make sure the file still runs after your changes.
\end{systempromptbox}

\begin{userpromptbox}
# Current Program

```{{language}}
{{code_content}}
```

Performance metrics:
{{performance_metrics}}

# Task

Design a new algorithm to replace the current implementation.
Do not make incremental changes -- propose a fundamentally different or significantly improved approach.
Your new program must maintain the same inputs and outputs.
\end{userpromptbox}

\subsection{\texttt{rewrite\_with\_inspo}: interacting global kernel (crossover)}
\label{app:prompts_rewrite_inspo}

\begin{systempromptbox}
You are an expert algorithm designer. You will be shown the current program AND one or more high-performing reference programs that solve the same problem.
Your task is to synthesize a new program that combines the best ideas from all provided programs, or uses them as inspiration to design something even better.
Think of this as intelligent crossover -- not just copy-paste, but creative recombination of algorithmic insights.

[Output format identical to rewrite_no_inspo: <NAME>, <DESCRIPTION>, and full <CODE>.]

* Combine strengths from multiple programs -- don't just pick one and copy it.
\end{systempromptbox}

\begin{userpromptbox}
# Current Program

```{{language}}
{{code_content}}
```

Performance metrics:
{{performance_metrics}}

# Reference Programs

The following high-performing programs solve the same problem. Use them as inspiration to design a new, superior program that combines the best ideas.

{{inspiration_section}}

# Task

Synthesize a new program that combines the best elements from the current program and the reference programs.
Go beyond simple merging -- creatively recombine algorithmic insights to produce something better than any individual program.
Your new program must maintain the same inputs and outputs.
\end{userpromptbox}

\section{Visualizing an \textsc{SMCEvolve} Run}
\label{app:flow}

\Cref{fig:flow} traces a real run of \textsc{SMCEvolve} on the
\emph{Circle Packing in a Rectangle} ($N{=}21$) task with $I{=}2$ islands,
$N{=}8$ particles per island, and $K{=}2$ MH proposals per particle.
Each panel shows one SMC iteration on a single island, organized into the
three sub-steps of the kernel: \emph{reweight} (top), \emph{resample}
(middle), and \emph{mutate} (bottom). In the reweight row, node area is
proportional to the normalized importance weight $w_i/\max_j w_j$, and a
thick colored ring marks particles that arrived via inter-island migration
in the previous epoch. Resample arrows depict the multinomial replication
map: arrows converging on a single parent indicate replication of a
high-weight ancestor, while parents with no outgoing arrow are discarded.
The mutate block stacks one row per MH proposal, with vertical dashed lines
linking the successive proposals of the same particle.

Node \emph{fill} color encodes reward through a global red--yellow--green
colormap normalized over the full run. On mutate nodes we additionally
encode three pieces of kernel metadata: \emph{stroke color} distinguishes
the edit operator (blue $=$ local diff edit, orange $=$ full-program
rewrite); \emph{stroke width} marks whether the proposal was conditioned on
inspiration programs (thick) or not (thin); and \emph{line style with fill
opacity} encodes the MH outcome (solid/opaque $=$ accepted,
dashed/translucent $=$ rejected with the fill reverting to the parent
reward, dotted/faded $=$ skipped). The header above each panel reports the
annealing schedule statistics ($\lambda_t$, $\Delta\beta_t$, $\beta_t$,
ESS at $\lambda_t$) together with the best and mean reward of the
population, so that the kernel-induced dynamics can be cross-referenced
with the adaptive temperature schedule at a glance.

\begin{figure}[t]
  \centering
  \includegraphics[width=\textwidth]{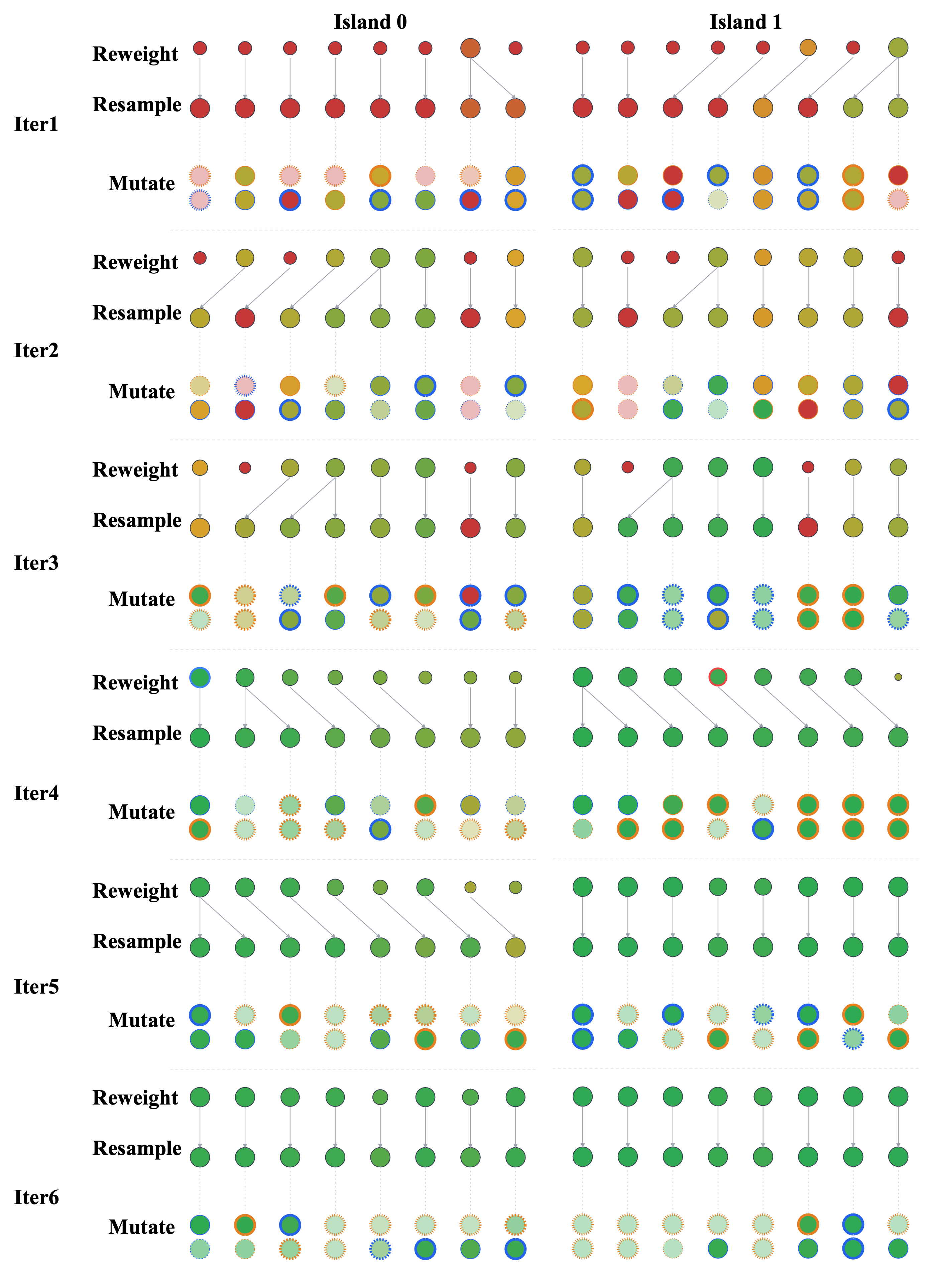}
  \caption{\textbf{Per-iteration SMC flow on Circle Packing $N{=}21$}
  ($I{=}2$ islands, $N{=}8$ particles per island, $K{=}2$ MH proposals).
  Rows of each panel correspond to \emph{reweight}, \emph{resample}, and
  \emph{mutate}; node fill encodes reward (red--yellow--green); thick rings
  in the reweight row mark inter-island migrants; resample arrows show the
  multinomial replication map; on mutate nodes, stroke color distinguishes
  the edit operator (blue $=$ local diff, orange $=$ full rewrite), stroke
  width flags inspiration-conditioned proposals, and line style/opacity
  encodes the MH outcome (solid $=$ accepted, dashed $=$ rejected, dotted
  $=$ skipped). Panel headers list the annealing schedule
  ($\lambda_t,\Delta\beta_t,\beta_t,\mathrm{ESS}$) and the population
  best/mean reward.}
  \label{fig:flow}
\end{figure}

\clearpage

%% file: sections/6_related_work.tex
\section{Related Work}
\label{sec:related}

\paragraph{LLM-driven program evolution and scientific discovery.}
LLMs have become effective generators inside evaluator-guided discovery loops.
FunSearch repeatedly proposes and evaluates programs to obtain mathematical
constructions~\citep{romera2024mathematical}; AlphaEvolve scales this idea to a
general coding agent for scientific and algorithmic discovery
~\citep{novikov2025alphaevolve}; and ShinkaEvolve, ReEvo, and Evolution of
Heuristics improve sample efficiency through open-ended evolution, reflection,
and heuristic design~\citep{lange2025shinkaevolve,ye2024reevo,liu2024evolution}.
Related agent benchmarks and systems study automated scientific workflows,
machine-learning engineering, symbolic equation discovery, materials discovery,
and numerical-program optimization
~\citep{yamada2025ai,chan2024mle,imajuku2025ale,shojaee2024llm,shojaee2025llm,abhyankarllema,press2025}.
These works motivate evaluator-guided program search, but their parent
selection, population management, mutation schedules, and stopping rules are
typically engineered as separate heuristics. \textsc{SMCEvolve} focuses on this
population-search core and interprets it as SMC sampling from a reward-tilted
program distribution.

\paragraph{SMC for language-model inference and reasoning.}
Recent work also applies SMC directly to language-model generation. Lew et al.
use SMC steering for constrained generation specified as posterior inference in
probabilistic sequence models~\citep{lew2023smc}. Zhao et al. develop twisted
SMC for sampling from sequence-level unnormalized targets in LLM inference and
safety tasks~\citep{zhao2024twisted}, and Feng et al. use twisted SMC to guide
multi-step mathematical reasoning~\citep{feng2025step}. These methods treat
prefixes, complete sequences, or reasoning traces as particles within a single
generation episode. In contrast, \textsc{SMCEvolve} treats executable programs
as particles, obtains rewards from an external evaluator, and uses SMC to
structure population-based scientific program evolution rather than controlled
decoding.

\paragraph{SMC, annealing, and adaptive sampling foundations.}
Our method builds on SMC samplers, which move particles through intermediate
distributions via weighting, resampling, and mutation
~\citep{del2006sequential,dai2022invitation,delmoral2004feynman,chopin2020introduction}.
The reward-tempered path is related to annealed importance sampling, simulated
annealing, and cross-entropy methods for stochastic optimization
~\citep{neal2001annealed,kirkpatrick1983optimization,rubinstein1999crossentropy}.
We import finite-sample SMC complexity theory~\citep{marion2023finite} rather
than proving a new general theorem. The contribution is the specialization to
LLM-driven program evolution: the reward-tilted target, bridge path, resampling
weights, and reference-kernel requirements jointly explain the design levers and
limits of the evolutionary coding agent.